\theoremstyle{plain}
\newtheorem{theorem}{Theorem}
\theoremstyle{definition}
\newtheorem{assumption}[theorem]{Assumption}
\theoremstyle{remark}
\newtheoremstyle{TheoremNum}
    {\topsep}{\topsep}
    {\itshape}
    {}
    {\bfseries}
    {.}
    { }
    {\thmname{#1}\thmnote{ \bfseries #3}}
\theoremstyle{TheoremNum}
\newtheorem{reptheorem}{Theorem}
\def\eqref#1{equation~\ref{#1}}
\def\mI{{\bm{I}}}
\def\mSigma{{\bm{\Sigma}}}
\DeclareMathAlphabet{\mathsfit}{\encodingdefault}{\sfdefault}{m}{sl}
\SetMathAlphabet{\mathsfit}{bold}{\encodingdefault}{\sfdefault}{bx}{n}
\newcommand{\E}{\mathbb{E}}
\DeclareMathOperator*{\argmax}{arg\,max}
\DeclareMathOperator*{\argmin}{arg\,min}
\newcommand{\states}{\mathcal{S}}
\newcommand{\actions}{\mathcal{A}}
\newcommand{\adv}{\nu}
\newcommand{\advset}{\mathbb{A}}
\newcommand{\mdp}{\mathcal{M}}
\newcommand{\conc}{\beta}
\newcommand{\expec}{\mathop{\mathbb{E}}}
\newcommand{\var}{\mathrm{Var}}
\newcommand{\prob}{\mathrm{Pr}}
\newcommand{\norm}{\mathcal{N}}
\newcommand{\mZero}{{\bm{0}}}
\newcommand{\Reals}{\mathbb{R}}
\newcommand{\policyLoss}{L^{\text{normal}}}
\newcommand{\safetyLoss}{L^{\text{symbolic}}}
\newcommand{\wcar}{\textsc{Wcar}\xspace}
\title{Certifiably Robust Reinforcement Learning through Model-Based Abstract Interpretation}
\author{Chenxi Yang \\
    UT Austin \\
  \And
  Greg Anderson \\
  UT Austin \\
  \AND
  Swarat Chaudhuri \\
  UT Austin \\
}
\begin{document}

\maketitle

\newcommand{\red}[1]{\textcolor{black}{#1}} 
\newcommand{\blue}[1]{\textcolor{black}{#1}}
\newcommand{\sys}{\textsc{Carol}\xspace} %

\begin{abstract}
We present a reinforcement learning (RL) framework in which the learned policy comes with a machine-checkable \emph{certificate of provable adversarial robustness}. 
Our approach, called \sys, learns a model of the environment. In each learning iteration, it uses the current version of this model and an external \emph{abstract interpreter} to construct a differentiable signal for provable robustness. This signal is used to guide learning, and the abstract interpretation used to construct it directly leads to the robustness certificate returned at convergence. We give a theoretical analysis that bounds the worst-case accumulative reward of \sys. We also experimentally evaluate \sys on four MuJoCo environments with continuous state and action spaces. On these tasks, \sys learns policies that, when contrasted with policies from two state-of-the-art robust RL algorithms, exhibit: (i) markedly enhanced certified performance lower bounds; and (ii) comparable performance under empirical adversarial attacks. 
\end{abstract}

\section{Introduction}\label{sec:intro}
Reinforcement learning (RL) is an established approach to control tasks \cite{polydoros2017survey, mnih2015human}, including safety-critical ones \cite{cheng2019end, sallab2017deep}. However, state-of-the-art RL methods 
use neural networks as policy representations. This makes them vulnerable to adversarial attacks in which carefully crafted perturbations to a policy's inputs cause it to behave incorrectly.  
These problems are even more severe in RL than in supervised learning, as the effects of successive mistakes can cascade over a long time horizon. 

These challenges have motivated research on RL algorithms that are robust to adversarial perturbations. In general, adversarial learning techniques can be divided into best-effort heuristic defenses and \emph{certified} approaches that guarantee provable robustness.
The latter are preferable as heuristic defenses are often defeated by counterattacks \cite{russo2019optimal}. While many certified defenses are known for the supervised learning setting \cite{mirman2018differentiable, cohen2019certified,wong2018provable},  
extending these methods to RL has been difficult. The reason is that RL involves a black-box environment. 
{To ensure the \red{certified} robustness of an RL policy, one needs to reason about repeated interactions between the policy, the environment, and the adversary,} and there is no general approach to doing so. Existing approaches to deep certified RL typically sidestep the challenge through various simplifying assumptions, for example, that the perturbations are stochastic rather than adversarial \cite{kumar2021policy}, that the certificate only applies to one-shot interactions between the policy and the environment \cite{oikarinen2021robust,zhang2020robust}, or that the action space is discrete \cite{lutjens2020certified}. 

In this paper, we develop a framework, called \textbf{\sys} (\textbf{C}ertifi\textbf{A}bly \textbf{RO}bust Reinforcement \textbf{L}earning), 
that fills this gap in the literature. We reason about adversarial dynamics over entire episodes by learning a \emph{model} of the environment and repeatedly composing it with the policy and the adversary. 
To this end, we consider a {state-adversarial} Markov Decision Process \cite{zhang2020robust} in which the observed states are adversarially attacked states of the original environment. \red{This threat model aligns with many existing efforts on robust RL \cite{oikarinen2021robust, zhang2020robust, lin2017tactics, lutjens2020certified, fischer2019online} and is also important for real-world RL agents under unpredictable sensor noise.} During exploration, our algorithm learns a model of the environment using an existing model-based reinforcement learning algorithm \cite{janner2019trust}. We perform \emph{abstract interpretation} \cite{cousot1977abstract,mirman2018differentiable} over compositions of the current policy and the learned environment model to estimate worst-case bounds on the agent's adversarial reward. 
The lower bound on the reward is then used to guide the learning.

A key benefit of our \red{\emph{model-based abstract interpretation}} approach is that it not only computes bounds on a policy's worst-case reward but also offers a \emph{proof} of this fact if it holds. A {certificate of robustness} in our framework consists of such a proof.

Our results include a theoretical analysis of our learning algorithm, which shows that our learned certificates give probabilistically sound lower bounds on the accumulative reward of any allowed adversary. We also empirically evaluate \sys over four high-dimensional MuJoCo environments (Hopper, Walker2d, Halfcheetah, and Ant). We demonstrate that {\sys} is able to successfully learn certified policies for these environments and that our strong certification requirements do not compromise empirical performance.
To summarize, our main contributions are as follows:
\begin{itemize}[leftmargin=10pt,itemsep=0pt,topsep=0pt]
     \item We offer \sys, the first 
     RL framework to guarantee episode-level certifiable adversarial robustness in the presence of continuous states and actions. The framework is based on a new combination of model-based learning and abstract interpretation that can be of independent interest.
    \item We give a rigorous theoretical analysis that establishes the (probabilistic) soundness of \sys.
    \item We give experiments on four MuJoCo domains that establish \sys as a new state-of-the-art for certifiably robust RL. 
\end{itemize}


\section{Background}\label{sec:bg}
\label{prob:mdp}

\textbf{Markov Decision Processes (MDPs).} We start with the standard definition of an \emph{Markov Decision Process} (MDP) $\mdp = (\states, \actions, r, P, \states_0)$. Here, $\states$ is a set of states and $\actions$ is a set of actions; \red{for simplicity of presentation, we assume these sets to be $\Reals^k$ and $\Reals^m$ for suitable dimensionality $k$ and $m$}.  
$\states_0 $ is a distribution of initial states; $P(s' \mid s, a)$, for $s,s' \in \states$ and $a \in \actions$, is a probabilistic transition function; $r(s, a)$ for $s\in \states, a \in  \actions$ is a real-valued reward function. Our method assumes an additional property that is commonly satisfied in practice: that $P(s' \mid s, a)$ has the form $\mu_P(s, a) + f_P(s')$, where $f_P(s')$ is a distribution independent of $(s, a)$ and $\mu_P$ is deterministic. 

A \emph{policy} in $\mdp$ is a distribution $\pi(a \mid s)$ with $s \in \states$ and $a \in \actions$. A (finite) \emph{trajectory} $\tau$ is a sequence  $s_0, a_0, s_1, a_1, \ldots$ such that $s_0 \sim \states_0$, each $a_i \sim \pi(s_i)$, and each $s_{i+1} \sim P(s' \mid s_i, a_i)$. We denote by $R(\tau) = \sum_i r(s_i, a_i)$ the aggregate (undiscounted) reward along a trajectory $\tau$, and by $R(\pi)$ the expected reward of trajectories unrolled under $\pi$.  
 
\textbf{State-Adversarial MDPs.} We model adversarial dynamics using 
\emph{state-adversarial MDPs} \cite{zhang2020robust}. 
Such a structure is a pair $\mdp_\adv = (\mdp, B)$, where $\mdp = (\states, \actions, r, P, \states_0)$ is an MDP, 
\red{and $B : \states \to \mathcal{P}(\states)$ is a \emph{perturbation map}, where  $\mathcal{P}(\states)$ is the power set of $\states$. Intuitively, $B(s)$ is 
the set of all states that can result from adversarial perturbations of $s$}. 

Suppose we have a policy $\pi$ in the underlying MDP $\mdp$.
In an attack scenario, 
an adversary $\adv$ perturbs the \emph{observations} of the agent at a state $s$. 
As a result, rather than choosing an action from $\pi(a \mid s)$, the agent now chooses an action from $\pi(a \mid \adv(s))$. However, the environment transition is still sampled from $P(s' \mid s, a)$ and \emph{not} $P(s' \mid \adv(s), a)$, as the ground-truth state does not change under the attack. 
We denote by $\pi \circ \adv$ the state-action mapping that results when $\pi$ is used under this attack scenario.

Naturally, if $\adv$ can arbitrarily perturb states, then adversarially robust learning is intractable. Consequently, we constrain $\adv$ using $B$, requiring $\adv(s) \in B(s)$ for all $s \in \states$.  
We denote the set of allowable adversaries in $\mdp_\adv$ as $\advset_B = \left \{ \adv : \states \to \states \mid \forall s \in \states.\  \adv(s) \in B(s) \right \}$. 

\textbf{Abstract Interpretation.} We certify adversarial robustness using 
\emph{abstract interpretation} \cite{cousot1977abstract}, a classic framework for worst-case safety analysis of systems. 
Here, one represents sets of values --- e.g., system states, actions, and reward values --- using symbolic representations (\emph{abstractions}) in a predefined language.
For example, we can set our \emph{abstract states} to be
hyperintervals that maintain upper and lower bounds in each state space dimension. 
We denote abstract values with the superscript $\#$. For a set of concrete states, $S$, $\alpha(S)$ denotes the smallest abstract state which contains $S$. For an abstract state $s^\#$,  $\conc(s^\#)$ is the set of concrete states represented by $s^\#$. {For simplicity of notation, we assume that functions over concrete states are lifted to their abstract analogs.} For example, $\pi(s^\#)$ is short-hand for a function $\pi^\#(s^\#) = \alpha(\{\pi(s): s \in \beta(s^\#)\})$. Similar functions are defined for abstract reward values, actions, and so on. 

The core of abstract interpretation is the propagation of abstract states $s^\#$ through a function $f(s)$ that captures single-step system dynamics. 
For propagation, we assume that we have access to a map $f^\#(s^\#)$ that ``lifts" $f$ to abstract states. This function must satisfy the property
$\conc(f^\#(s^\#)) \supseteq \{f(s):  s \in \conc(s^\#)\}.$
Intuitively, $f^\#$ \emph{overapproximates} the behavior of $f$: while the abstract state $f^\#(s^\#)$ may include some states that are not actually reachable through the application of $f$ to states encoded by $s^\#$, it will \emph{at least} include \emph{every} state that is reachable this way. 

By starting with an abstraction $s^\#_0$ of the
initial states and using abstract interpretation to propagate this abstract state through the transition function $f$, we can obtain an abstract state $s^\#_i$ which includes all states of the system that are reachable in $i$ steps for increasing $i$. The sequence of abstract states $\tau^\#=s^\#_0s^\#_1s^\#_2\dots$ is called 
an \emph{abstract trace}.

\section{Problem Formulation}\label{sec:prob}

\begin{table}[t]
\centering
\scalebox{0.68}{
\begin{NiceTabular}{c|ccccccccc}
\toprule
$s_{0} = 1.0$ & \makecell{$\adv(s)$ = $\langle s_{0} + \epsilon_{0}$, \\ $s_{1} + \epsilon_{1} \rangle$} & $s_0$ & $s_{\text{obs}0}$ & $a_{0}$ & $s_{1}$ & $s_{\text{obs}1}$ & $a_{1}$ & $R(\pi \circ \adv)$ & $\expec_{\tau \sim \pi \circ \adv}[R]$ \\ \midrule
No-Adv & $\epsilon_{0} = \epsilon_{1} = 0.0$ & 1 & 1 & 1 & $2 + e$ & $2 + e$ & $2 + e$ & $6 + 2e$ & $6$\\
\rowcolor[gray]{.8} Adv-1 & \makecell{$\epsilon_{0} = 0.1$, \\ $\epsilon_{1} = -0.4$} & 1 & 1.1 & 1.1 & $2.1 + e$ & $1.7 + e$ & $1.7 + e$ &  $5.9 + 2e$ & $5.9$\\
Adv-2 & \makecell{$\epsilon_{0} =-0.2$, \\ $\epsilon_{1} = -0.3$} & 1 & 0.8 & 0.8 & $1.8 + e$ & $1.5 + e$ & $1.5 + e$ & $5.1 + 2e$ & $5.1$\\
\rowcolor[gray]{.8}  Reward Bound ($R^\#$) & \makecell{ $\epsilon_{t} \in [-0.5, 0.5]$, \\ $\epsilon^{\#}_{t} = [-0.5, 0.5]$} & 1 & $1 + [-0.5, 0.5]$ & $[0.5, 1.5]$ & $[1.5, 2.5] + e$ & $[1, 3] + e$ & \makecell{$[1 + e$, \\ $3 + e]$} & \makecell{$[4 + 2e$, \\ $8 + 2e]$} & $[4, 8]$ \\
\bottomrule
\end{NiceTabular}
}
\vspace{5pt}
\caption{Example of reward bound calculation. The MDP in this example has initial state set $S_0=[1.0, 1.0]$, white-box transition function $P(s'|s, a)=s + a + \mathcal{N}(0, 1)$, reward function $r(s, a) = s + a$, and adversary $\adv(s) \in [s-0.5, s+0.5]$. $\epsilon_t$ denotes the disturbance added on step $t$. $e$ represents the stochasticity from the transition, where $e \sim \mathcal{N}(0, 1)$. We aim to certify over the worst-case accumulative reward of a deterministic policy $\pi$ defined as $\pi(s) = s$. We define the \textit{worst-case} by considering all potential adversaries while still considering the expected behavior over the stochastic environment, $P$. As shown in the above table, we first demonstrate three traces from fixed adversaries. In the last row, we demonstrate how we consider all the adversary behaviors through an abstract trace via abstract interpretation with intervals. The worst-case accumulative reward in this example is $4$ as $\expec_{\mathcal{N}} [e] = 0$. The abstract trace over all the adversaries in the last row is our certificate which serves as a proof that the policy satisfies our property. We want to ensure that the lower bound of the $R^\#$ should not be lower than a threshold. In training, we use the abstract trace to compute a loss to guide the learning process.}
\label{tab:example}
\vspace{-10pt}
\end{table}

We start by defining robustness. Assume an adversarial MDP $\mdp_\adv$, a
policy $\pi$, and a {threshold} $\Delta > 0$. A \emph{robustness property} is a constraint $\phi(\pi, \Delta)$ of the form 
$\forall \adv \in \advset_B.\  R(\pi) - R(\pi \circ \adv) < \Delta.$
Intuitively, $\phi$ states that no allowable adversary can reduce the expected reward of $\pi$ by more than $\Delta$. 

Our goal in this paper is to learn policies that are \emph{provably} robust. 
Accordingly, we expect our learning algorithm to produce, in addition to a policy $\pi$, a \emph{certificate}, or proof, $c$ of robustness. 
Formally, let $\Pi$ be the universe of all policies in a given state-adversarial MDP $\mdp_\adv = (\mdp, B)$. 
For a policy $\pi$ and a robustness property $\phi$, we write $\pi \vdash_c \phi$ if $\pi$ \emph{provably} satisfies $\phi$, and $c$ is a proof of this fact. 

The problem of \emph{reinforcement learning with robustness certificates} is now defined as:
\begin{equation}\label{eq:goal}
(\pi^*, c) = \argmax_{\pi \in \Pi} \expec_{\tau \sim (\mdp, \pi)} \left[ R(\tau)\right], 
\textrm{s.t.~~} \pi^* \vdash_c \phi.
\end{equation}
That is, we want to find a policy that maximizes the standard expected reward in RL but also ensures that the expected worst-case \emph{adversarial }reward is provably above a threshold. 

Our certificates can be constructed using a variety of symbolic or statistical techniques. In \sys, certificates are constructed using an abstract interpreter.
Suppose we have a policy $\pi$ and an abstract trace $\tau^\#=s^\#_0s^\#_1\dots s^\#_n$ such that for all length-$n$ trajectories $\tau = s_0\dots s_n$ and all $i$, $s_i \in \beta(s^\#_i)$. The abstract trace allows us to compute a lower bound on the expected reward for $\pi$ and also serves as a proof of this bound. We give an example of such certification in a simple state-adversarial MDP, assumed to be available in white-box form, in \Cref{tab:example}.

A challenge here is that abstract interpretation requires a white-box transition function, which is not available in RL. We overcome this challenge by learning a model of the environment during exploration. Model learning is a source of error, so our certificates are probabilistically sound, i.e., they guarantee robustness with high probability. However, this error only depends on the underlying model-based RL algorithm and does not restrict the adversary.

\vspace{-5pt}

\section{Learning Algorithm}\label{sec:alg}
\begin{wrapfigure}{r}{0.5\linewidth}
    \vspace{-15pt} 
    \centering
    \includegraphics[scale=0.6,bb=0 0 300 150]{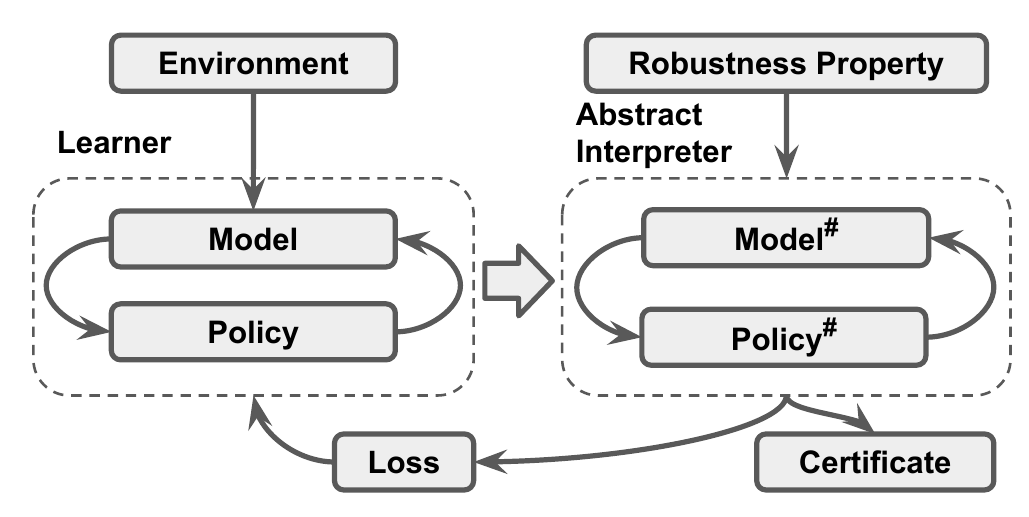}
    \caption{Schematic of \sys}
    \label{fig:algo-overview}
    \vspace{-15pt}
\end{wrapfigure}

Now we present the \sys framework. The framework (\Cref{fig:algo-overview}) has two key components: a model-based learner and an abstract interpreter. During each training round, the learner maintains a model of the environment dynamics and a policy. These are sent to the abstract interpreter, which calculates a lower bound on the abstract reward. The lower bound is used to compute a differentiable loss the learner uses in the next iteration of learning. At convergence, the abstract trace computed during abstract interpretation is returned as a certificate of robustness.

\textbf{Abstract Interpretation in \sys.} 
Now we describe the abstract interpreter in \sys in more detail. 
Recall that our definition of robustness compares the \emph{expected} reward of the original policy to the \emph{expected} reward of the policy under an adversarial perturbation. As a result, our verifier is designed to reason about the worst-case reward under adversarial perturbations, while considering average-case behavior for stochastic policies and environments. \Cref{alg:proof} finds a lower bound on this worst-case expected reward using abstract interpretation to overapproximate the adversary's possible behaviors along with sampling to approximate the average-case behavior of the policy and environment. We denote this lower bound from \Cref{alg:proof} as \textit{worst-case accumulative reward} (\textbf{\wcar}), which is also used to measure the certified performance in our evaluation.

In more detail, Algorithm~\ref{alg:proof} proceeds by sampling a starting state $s_0 \sim \states_0$. Then in Algorithm~\ref{alg:abs-rollout} for each time step, we find an overapproximation $s_{\text{obs}_i}^\#$ which includes all of the possible ways the adversary may perturb $s_i$. Based on this approximation, we \emph{sample} a new approximation from the policy $\pi$. Intuitively, this may be done by using a policy $\pi$ whose randomness does \emph{not} depend on the current state of the system. More formally, $\pi(a \mid s) = \mu_\pi(s) + f_{\pi}(a)$ where $f_{\pi}(a)$ is a distribution with zero mean which is independent of $s$. Then $a_i^\#$ may be computed as $\mu_\pi(s_{\text{obs}_i}^\#) + \alpha(\{e\})$ where $e \sim f_{\pi}(a)$. Once the abstract action is computed, we may find the new (abstract) state and reward using the environment model $E$. \red{The model is assumed to satisfy a PAC-style bound, i.e., there exist $\delta_E$ and $\varepsilon_E$ such that with probability at least $1 - \delta_E$, $\| E(s, a) - P(s, a) \| \le \varepsilon_E$. The values of $\delta_E$ and $\varepsilon_E$ can be measured during model construction. }

One way to understand Algorithm~\ref{alg:proof} is to consider pairs of abstract and concrete trajectories in which the randomness is resolved in the same way. Specifically, if $\pi(a \mid s) = \mu_\pi(s) + f_{\pi}(a)$ and $E(s' \mid s, a) = \mu(s, a) + f_{E}(s')$, the initial state $s_0$ combined with the sequence of values $e_i \sim f_{\pi}(a)$ and $e'_i \sim f_{E}(s')$ for $0 \le i \le T$ uniquely determine a trajectory. For a given set of values, the reward bound $\inf \conc(R_{\text{min}_t}^\#)$ represents the worst-case reward under any adversary \emph{for a particular resolution of the randomness} in the environment and the policy. The outer loop of Algorithm~\ref{alg:proof} approximates the expectation over these different random values by sampling. Theorem~\ref{thm:sound} in Section~\ref{sec:the} shows formally that with high probability, Algorithm~\ref{alg:proof} gives a lower bound on the true adversarial reward. 

\begin{algorithm*}[t]
\caption{Worst-Case Accumulative Reward (\wcar)}
\label{alg:proof}
\begin{algorithmic}[1]
\small{
    \STATE \textbf{Input:} policy $\pi$, model $E$ \\
    \STATE \textbf{Output:} worst case reward of $\pi$ under any adversary
    \FOR{$t$ from 1 to $N$}
        \STATE Sample an initial state $s_0 \sim \states_0$
        \STATE Get the worst case reward $R_{\text{min}_t}$ using Algorithm~\ref{alg:abs-rollout} over horizon $T$ starting from $s_0$
    \ENDFOR
    \STATE \textbf{return} $\frac{1}{N}\sum_{t=1}^N R_{\text{min}_t}$
}
\end{algorithmic}
\end{algorithm*}
\begin{algorithm*}[t]
\caption{Worst-case rollout under adversarial perturbation}
\label{alg:abs-rollout}
\begin{algorithmic}[1]
\small{
    \STATE \textbf{Input:} Initial state $s_0$, rollout horizon $T$
    \STATE \textbf{Output:} Worst-case reward of $\pi$ starting from $s_0$ over one random trajectory
    \STATE Abstract the initial state and reward: ${s_\text{original}}_0^\# \leftarrow \alpha(\{s_0\}), \quad {R_{\text{min}_t}^{\#}}_{i} \leftarrow \alpha \left ( \{ 0 \} \right)$
    \FOR{$i$ from 1 to $T$}
        \STATE Abstract over possible perturbations: ${s_\text{obs}^{\#}}_{i} \leftarrow  B({s_\text{original}^{\#}}_{i}) $ \label{line:p3}
        \STATE Calculate symbolic predicted actions: $a_i^\# \leftarrow \pi({s_{\text{obs}}^{\#}}_i) $ \label{line:p1}
        \STATE Calculate symbolic next-step states and rewards: \\ ${s_\text{original}^{\#}}_{i+1}, r^{\#}_{i} \leftarrow  E_{\theta}({s_\text{original}^{\#}}_{i}, a^{\#}_i) + \alpha(\{ x \mid \| x \| \le \varepsilon_E \}) $ \label{line:p2}
        \STATE Update worst-case reward: ${R_{\text{min}}^{\#}}_{t} \leftarrow  {R_{\text{min}}^{\#}}_{t} + r^{\#}_{i} $ \label{line:p4}
    \ENDFOR
    \STATE \textbf{return} $\inf \conc({R_{\text{min}}^{\#}}_{t})$
}
\end{algorithmic}
\end{algorithm*}
\begin{algorithm*}[ht]
\caption{Certifiably Robust Reinforcement Learning}
\label{alg:main}
\begin{algorithmic}[1]
\small{
    \STATE Initialize a random policy $\pi_\psi$, random environment model $E_\theta$, and empty model dataset $\mathcal{D}_\text{model}$.
    \STATE Initialize an environment dataset $\mathcal{D}_\text{env}$ by unrolling trajectories under a random policy.
    \FOR{$N$ epochs}
        \STATE Train model $E_\theta$ on $\mathcal{D}_\text{env}$ via maximum likelihood
        \STATE Unroll $M$ trajectories int he model under $\pi_\psi$; add to $\mathcal{D_\text{model}}$
        \STATE Take action in environment according to $\pi_\psi$; add to $\mathcal{D_\text{env}}$
        \FOR{$G$ gradient updates}
            \STATE Calculate normal policy loss $\policyLoss(\pi_\psi, \mathcal{D_\text{model}})$ as in MBPO \cite{janner2019trust}
            \STATE Sample $\langle s_t, a_t, s_{t+1}, r_t\rangle$ uniformly from $\mathcal{D}_\text{model}$
            \STATE Rollout $\pi$ starting from $s_t$ under $E_\theta$ for $T_\text{train}$ steps and compute the total reward $R^o$
            \STATE Compute the worst-case reward $R_\text{min}$ using Algorithm~\ref{alg:abs-rollout} over horizon $T_\text{train}$.
            \STATE Compute the robustness loss $\safetyLoss(\pi_\psi, E_\theta) \gets R^o - R_\text{min}$
            \STATE Update policy parameters: $\psi \gets \psi - \alpha \nabla_\psi (\policyLoss(\pi_\psi, \mathcal{D}_\text{model}) + \lambda ( \safetyLoss(\pi_\psi, E_\theta) - \Delta))$
            \STATE Update Lagrange multiplier: $\lambda \gets \max(0, \lambda + \alpha' (\safetyLoss(\pi_\psi, E_\theta) - \Delta))$
        \ENDFOR
        \STATE Unroll $n$ trajectories in the true environment under $\pi_\psi$; add to $\mathcal{D}_{\text{env}}$
    \ENDFOR
}
\end{algorithmic}
\end{algorithm*}

\textbf{Learning in \sys.}
Now we discuss how to learn a policy and environment model which may be proven robust by Algorithm~\ref{alg:proof}. At a high level, Algorithm~\ref{alg:main} works by introducing a symbolic loss term $\safetyLoss_\psi$ which measures the robustness of the policy. Because robustness is a constrained optimization problem, we use this symbolic loss with a Lagrange multiplier in an alternating gradient descent scheme to find the optimal robust policy. Formally, for a given environment model $E$, the inner loop in Algorithm~\ref{alg:main} solves the optimization problem
\[ \argmin_\psi \policyLoss(\pi_\psi, \mathcal{D}_\text{model}) \quad \text{s.t.} \quad \safetyLoss(\pi, E) \le \Delta \]
via the Lagrangian
$\argmin_\psi \max_{\lambda \ge 0} \policyLoss(\pi_\psi, \mathcal{D}_\text{model}) + \lambda (\safetyLoss(\pi, E) - \Delta).$ 
We ensure that solving this problem solves the certifiable robustness problem by enforcing the following conditions: (i) $E$ accurately models the environment and (ii) $\safetyLoss(\pi, E)$ measures the ``provable robustness'' of $\pi$. Condition (i) is handled by alternating model updates with policy updates, in the style of Dyna~\cite{sutton1990dyna}, so we will focus on condition (ii).

The computation of $\safetyLoss$ uses the same underlying abstract rollouts (Algorithm~\ref{alg:abs-rollout}) as the verifier described in Algorithm~\ref{alg:proof}. Once again, this algorithm estimates the reward achieved by a policy under worst-case adversarial perturbations but average-case policy actions and environment transitions. We then define the robustness loss as the difference between the nominal loss $R^o$ and the \emph{provable} lower bound on the worst-case loss $R_\text{min}$. Now as long as $\safetyLoss < \Delta$, we satisfy the definition of robustness given in Section~\ref{sec:prob} for that specific trace. Repeating these gradient updates gives an approximation of the average-case behavior which is considered in Algorithm~\ref{alg:proof}.

\section{Theoretical Analysis}\label{sec:the}
Now we explore some key theoretical properties of \sys. Proofs are deferred to Appendix~\ref{app:proofs}.

\begin{theorem}\label{thm:sound}
Assume the environment transition distribution is $P(s' \mid s, a) = \norm(\mu_P(s, a), \mSigma_P)$ and the environment model is $E(s' \mid s, a) = \norm(\mu_E(s, a), \mSigma_E)$ with $\mSigma_P, \mSigma_E$ diagonal. Further, we assume that the model satisfies a PAC-style guarantee: for any state $s$, action $a$, and $\epsilon \in \states$, $| (\mu_P(s, a) + \mSigma_P^{1/2} \epsilon) - (\mu_E(s, a) + \mSigma_E^{1/2} \epsilon) | \le \varepsilon_E$ with probability at least $1 - \delta_E$. For any policy $\pi$, let the result of Algorithm~\ref{alg:proof} be $\hat{R}^\#$ and let the reward of $\pi$ under the optimal adversary $\adv^*$ be $R$. Then for any $\delta > 0$ with probability at least $1 - \delta$, we have
\begin{align*}
R \ge & \hat{R}^\# - \frac{1}{\sqrt{\delta}} \sqrt{\frac{\var \left [ R^\# \right ]}{N}} - \left ( 1 - \left ( 1 - \delta_E \right )^T \right ) C.
\end{align*}
where $C$ is a constant (see Appendix~\ref{app:proofs} for details of $C$).
\end{theorem}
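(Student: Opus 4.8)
The plan is to decompose the gap between the true adversarial reward $R$ and the algorithm's output $\hat R^\#$ into three sources of error and bound each separately: (1) the statistical error from approximating an expectation by an $N$-sample average in the outer loop of Algorithm~\ref{alg:proof}, (2) the soundness of abstract interpretation itself (the abstract rollout overapproximates the adversary's behavior, so it can only make the reward bound smaller, not larger), and (3) the modeling error from using $E$ in place of $P$. The key conceptual device, already hinted at in the text, is to couple abstract and concrete trajectories by fixing the same resolution of randomness: a starting state $s_0$ together with the noise draws $e_i \sim f_\pi$ and $e'_i \sim f_E$ determine both a concrete trajectory and the abstract trace in Algorithm~\ref{alg:abs-rollout}. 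I would write $R^\#$ for the random variable $\inf\conc(R_{\min}^\#)$ produced by one call to Algorithm~\ref{alg:abs-rollout}, so that $\hat R^\# = \frac1N\sum_t R^\#_t$ is an unbiased estimate of $\E[R^\#]$.

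First I would handle step (1): since $\hat R^\#$ is an average of $N$ i.i.d.\ copies of $R^\#$, Chebyshev's inequality gives that with probability at least $1-\delta$, $\E[R^\#] \ge \hat R^\# - \frac{1}{\sqrt\delta}\sqrt{\var[R^\#]/N}$, which is exactly the second term in the bound. Next, step (2): I claim that for any fixed resolution of the randomness $(s_0,\{e_i\},\{e'_i\})$, the concrete reward under the optimal adversary $\adv^*$ along the \emph{model} dynamics $E$ is at least $R^\#$ for that resolution. This is the standard soundness property of abstract interpretation: by induction on $i$, the concrete state $s_i$ under $\adv^*$ lies in $\beta(s^\#_{\text{original},i})$ — the base case is $s^\#_{\text{original},0} = \alpha(\{s_0\})$; the inductive step uses that $B$, $\pi$, and $E_\theta$ are lifted soundly (the $\supseteq$ property of $f^\#$), that the perturbed observation $\adv^*(s_i)\in B(s_i)\subseteq \beta(s^\#_{\text{obs},i})$, and that the shared noise $e_i$ is added on both sides — hence $r_i \ge \inf\conc(r^\#_i)$ and summing gives the claim. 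Taking expectations over the randomness then shows $\E[R^\#] \le R_E$, where $R_E$ is the expected adversarial reward of $\pi\circ\adv^*$ under the model $E$.

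Finally, step (3): I would bound $|R - R_E|$, the difference between running the optimal adversary against the true environment versus the model. Here the PAC-style coupling assumption is used: with probability at least $1-\delta_E$, a single transition of $E$ and $P$ (under the shared noise $\epsilon$) differ by at most $\varepsilon_E$ in norm. Over a horizon-$T$ rollout, a union bound says all $T$ transitions are $\varepsilon_E$-close with probability at least $(1-\delta_E)^T$; on the complementary event (probability $1-(1-\delta_E)^T$) I would bound the reward difference by a crude constant $C$ depending on $T$, the Lipschitz constants of $\mu_P,\mu_\pi,r$, $\varepsilon_E$, and the diameters of the relevant state/action regions — this is where the unspecified constant $C$ in the statement comes from. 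Combining: $R \ge R_E - (1-(1-\delta_E)^T)C \ge \E[R^\#] - (1-(1-\delta_E)^T)C \ge \hat R^\# - \frac{1}{\sqrt\delta}\sqrt{\var[R^\#]/N} - (1-(1-\delta_E)^T)C$, with the $\delta$-probability failure coming only from step (1).

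The main obstacle I expect is step (3): making the Lipschitz/telescoping argument that propagates a per-step model error $\varepsilon_E$ into a bound on the \emph{accumulated} reward difference, while being careful that (a) the optimal adversary for $P$ need not be optimal for $E$ (so one must argue in the right direction — using that $\adv^*$ is a \emph{valid, possibly suboptimal} adversary for the model, or redo the coupling with each environment's own optimal adversary and compare), and (b) errors compound multiplicatively along the horizon through the transition Lipschitz constant, so $C$ genuinely depends on $T$ (likely exponentially). Getting the direction of the inequality right in (a) and bookkeeping the compounding in (b) is the delicate part; steps (1) and (2) are routine.
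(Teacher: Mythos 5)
Your overall architecture is close to the paper's --- the Chebyshev step for the $N$-sample average, the coupling of abstract and concrete trajectories through a shared resolution of the randomness, and the split into a good event of probability $(1-\delta_E)^T$ versus a bad event handled by a Lipschitz/compounding constant $C$ --- but your middle step is routed in a way that cannot recover the stated bound. You prove soundness of the abstract trace only with respect to the \emph{model} dynamics ($R^\# \le R_E$, the adversarial reward under $E$), and then separately compare $R$ to $R_E$. The problem is that on your good event (all $T$ per-step PAC events hold), $R$ and $R_E$ are still not equal: the per-step discrepancy $\varepsilon_E$ compounds through the Lipschitz constants of $\mu_E$ and $\mu_\pi$, so $|R - R_E|$ is bounded only by a term of order $L_r(1+L_\pi)\,\varepsilon_E \sum_{i}\frac{1-(L_EL_\pi)^{i}}{1-L_EL_\pi}$, not by zero. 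That residual carries no $\left(1-(1-\delta_E)^T\right)$ prefactor, so your final inequality acquires an extra additive term that persists even as $\delta_E \to 0$ and is absent from the theorem.

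The missing idea is that Algorithm~\ref{alg:abs-rollout} already inflates the model's abstract output by $\alpha(\{x \mid \|x\| \le \varepsilon_E\})$ at every step (line~\ref{line:p2}), and the soundness induction should be run directly against the \emph{true} trajectory: if $s_i \in \conc(s_{\text{original}_i}^\#)$ and the per-step PAC event holds, then $\mu_P(s_i,a_i)+\mSigma_P^{1/2}e_i$ lies within $\varepsilon_E$ of $\mu_E(s_i,a_i)+\mSigma_E^{1/2}e_i$, which is itself contained in the model's abstract output, hence $s_{i+1}$ lies in the inflated abstract state. This gives $s_i \in \conc(s_{\text{original}_i}^\#)$ for all $i \le T$ with probability at least $(1-\delta_E)^T$, and hence $R^\#(\tau) \le R(\tau)$ \emph{exactly} on the good event; the Lipschitz compounding argument is then needed only on the complementary event, to bound $D = R^\#(\tau)-R(\tau)$ by the constant $C$, yielding $\E[D] \le (1-(1-\delta_E)^T)C$ with no residual. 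Two smaller points: your worry (a) about the adversary is unfounded, since the abstract trace overapproximates every $\adv \in \advset_B$ simultaneously, so fixing $\adv^*$ is harmless; and the probability that all $T$ steps succeed is obtained as a product $(1-\delta_E)^T$ through the step-by-step induction, not via a union bound, which would instead give $1-T\delta_E$.
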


Theorem~\ref{thm:sound} shows that our checker is a valid (probabilistic) proof strategy for determining if a policy is robust. That is, if we use Algorithm~\ref{alg:proof} to measure the reward of a policy under perturbation, the result is a lower bound of the true worst-case reward (minus a constant) with high probability, assuming an accurate environment model. The bound in Theorem~\ref{thm:sound} gives some interesting insights. First, the bound grows as $\delta$ shrinks, so we pay the price of a looser bound as we consider higher confidence levels. Second, the bound depends on the variance of the abstract reward and the number of samples in an intuitive way --- higher variance makes it harder to measure the true reward, and more samples make the bound tighter. Third, as $\delta_E$ increases, the last term of the bound grows, indicating that a less accurate environment model leads to a looser bound. Finally, the bound grows with $T$, indicating that over longer time horizons, our reward measurement gets less accurate. This is consistent with the intuition that the environment model may drift away from the true environment over long rollouts.

\begin{theorem}\label{thm:convergence}
Algorithm~\ref{alg:main} converges to a policy $\pi$ which is robust and verifiable by Algorithm~\ref{alg:proof}.
\end{theorem}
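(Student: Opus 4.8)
The plan is to recast Algorithm~\ref{alg:main} as a primal-dual (saddle-point) stochastic optimization procedure for the Lagrangian
$L(\psi, \lambda) = \policyLoss(\pi_\psi, \mathcal{D}_\text{model}) + \lambda(\safetyLoss(\pi_\psi, E_\theta) - \Delta)$,
and invoke standard convergence results for such schemes. The argument proceeds in three stages. First, I would fix the environment model: condition on the model $E_\theta$ having converged (or being held fixed) after the outer Dyna-style loop has run long enough, so that condition (i) — $E_\theta$ accurately models the environment in the PAC sense of Theorem~\ref{thm:sound} — holds. This lets us treat the inner loop purely as an optimization over $\psi$ and $\lambda$ with a fixed objective. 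Second, I would argue that lines 13--14 of Algorithm~\ref{alg:main} are exactly a stochastic gradient descent-ascent iteration on $L$: the policy update is a descent step in $\psi$ on $\policyLoss + \lambda(\safetyLoss - \Delta)$, and the multiplier update is a projected ascent step in $\lambda$ on the same quantity, with the projection $\max(0,\cdot)$ enforcing $\lambda \ge 0$. Under the usual assumptions (bounded stochastic gradients, Robbins–Monro step sizes $\alpha, \alpha'$, and either convexity in $\psi$ or, more realistically, convergence to a stationary point of the Lagrangian), this iteration converges to a KKT point $(\psi^*, \lambda^*)$ of the constrained problem $\min_\psi \policyLoss(\pi_\psi)$ s.t. $\safetyLoss(\pi_\psi, E_\theta) \le \Delta$.

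Third, I would translate the KKT conditions into the two claims of the theorem. At the converged point, primal feasibility gives $\safetyLoss(\pi_{\psi^*}, E_\theta) \le \Delta$; by the construction of $\safetyLoss$ as $R^o - R_\text{min}$ and the discussion in Section~\ref{sec:alg}, this means the provable lower bound $R_\text{min}$ produced by Algorithm~\ref{alg:abs-rollout} is within $\Delta$ of the nominal reward, which is precisely the robustness property $\phi(\pi_{\psi^*}, \Delta)$ restricted to the traces sampled — and in expectation, by the averaging argument of Theorem~\ref{thm:sound}, the property $\forall \adv \in \advset_B.\ R(\pi) - R(\pi\circ\adv) < \Delta$ holds (up to the probabilistic slack of Theorem~\ref{thm:sound}). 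Because the certificate is literally the abstract trace computed inside Algorithm~\ref{alg:proof}/Algorithm~\ref{alg:abs-rollout}, verifiability is immediate: running Algorithm~\ref{alg:proof} on $\pi_{\psi^*}$ reproduces the bound, so $\pi_{\psi^*} \vdash_c \phi$ with the abstract trace as $c$.

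The main obstacle I anticipate is the non-convexity of both $\policyLoss$ and $\safetyLoss$ in the neural-network parameters $\psi$: genuine convergence of gradient descent-ascent to a global saddle point is not available in that regime, so the theorem must either (a) be read as convergence to a stationary/KKT point rather than a global optimum, or (b) lean on an assumption that the relevant losses are convex (or satisfy a Polyak–Łojasiewicz-type condition) in a neighborhood of the solution. A secondary technical point is the interaction between the outer model-learning loop and the inner policy loop — one must argue that the model updates do not destabilize the inner saddle-point dynamics, e.g. by appealing to a two-timescale argument where the model stabilizes on a slower scale, or by simply assuming the model has converged before analyzing the policy. I would state these as explicit hypotheses and then the rest of the argument is a routine application of known primal-dual stochastic approximation theory.
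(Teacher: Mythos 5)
Your proposal takes essentially the same route as the paper: the paper's entire justification for Theorem~\ref{thm:convergence} is the one-sentence remark that Algorithm~\ref{alg:main} is a standard primal-dual scheme for the constrained problem in Equation~\ref{eq:goal}, with a citation to prior work on Lagrangian-based constrained learning. Your write-up is in fact more careful than the paper's, since you explicitly surface the two caveats the paper leaves implicit (non-convexity in $\psi$, so convergence is only to a stationary/KKT point, and the need to decouple or two-timescale the model updates from the inner saddle-point iteration); these would need to be stated as hypotheses for the theorem to be literally true as written.
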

\vspace{-5pt}

Intuitively, the theorem shows that Algorithm~\ref{alg:main} solves the certifiable robustness problem, i.e., it converges to a policy that passes the check by Algorithm~\ref{alg:proof}. The proof is straightforward because Algorithm~\ref{alg:main} is a standard primal-dual approach to solve the constrained optimization problem outlined in Equation~\ref{eq:goal} \cite{nandwani2019lagrange}.

\section{Evaluation}\label{sec:eval}
We study the following experimental questions:

\begin{enumerate}[label=\upshape\bfseries RQ\arabic*:, wide = 0pt, itemsep=0pt,topsep=0pt] 
    \item Can \sys learn policies with nontrivial 
certified reward bounds? 
    \item Do the certified bounds for \sys beat those for other (non-certified) robust RL methods? 
    \item How is \sys's performance on empirical adversarial inputs?
    \item How does \sys's model-based training approach affect performance?
\end{enumerate}

\textbf{Environments and Setup.}
Our experiments consider 
$l_\infty$-norms perturbation of the state with radius $\epsilon$: $B_p(s, \epsilon) := \{s' | \| s' - s \| \le \epsilon \}$.
We implement \sys on top of the 
MBPO \cite{janner2019trust} model-based RL algorithm using the implementation from \cite{Pineda2021MBRL}. For training, we use Interval Bound Propagation (IBP) \cite{gowal2018effectiveness} as a scalable abstract interpretation mechanism to compute the layer-wise bounds for the neural networks, \red{where all the abstract states are represented as intervals per dimension}. \red{More details of the abstract transition are omitted to \cref{app:ai-bound}.} During the evaluation, we use CROWN \cite{zhang2018efficient}, a more computationally expensive but tighter bound propagation method based on IBP. During training, we use a $\epsilon$-schedule \cite{gowal2018effectiveness, zhang2020robust} to slowly increase the $\epsilon_t$ at each epoch within the perturbation budget until reaching $\epsilon$. Note that the policies take action stochastically during training, but we set them to be deterministic during evaluation. 

We experiment on four MuJoCo environments in OpenAI Gym \cite{openai}. For \sys, we use the same hyperparameters for the base RL algorithms as in \cite{Pineda2021MBRL} without further tuning. Specifically, we do not use an ensemble of dynamics models. Instead, we use a single dynamic model, which is the case when the ensemble is of size 1. We use Gaussian distribution as the independent noise distribution, $f_\pi(a), f_E (s')$ for both policy and model in the experiments. Concretely, the output of our policies are the parameters $\mu_\pi$, $\mSigma_\pi$ of a Gaussian, with $\mSigma_\pi$ being diagonal and independent of input state $s$. For the model, the output are the parameters $\mu_E$, $\mSigma_E$ of a Gaussian, with $\mSigma_E$ being diagonal and independent of input $s, a$. \red{The model error $\varepsilon_E$ is measured and considered following \cref{alg:abs-rollout}. We defer the detailed descriptions of the model error and its construction details to \cref{app:model-error}.}

We compare \sys with the following methods: (1) MBPO \cite{janner2019trust}, our base RL algorithm. (2) SA-PPO \cite{zhang2020robust}, a robust RL algorithm bounding per-step action distance. (3) RADIAL-PPO \cite{oikarinen2021robust}, a robust RL algorithm using lower bound PPO loss to update the policy. (4) \sys-Separate Sampler(\sys-SS), an ablation of \sys. In \sys, we update the policy loss $\policyLoss$ with the data sampled from the rollout between the learned model and the policy. While in \sys-SS, the data for $\policyLoss$ is sampled from the rollout between the environment and the policy. The $\epsilon_{\text{train}}$ is 0.075, 0.05, 0.075, 0.05 for Hopper, Walker2d, HalfCheetah, and Ant for \sys, \sys-SS, SA-PPO, and RADIAL-PPO in this section for consistency with baselines. \red{Detailed training setup are available in \Cref{app:training}}.

\textbf{Evaluation.} We evaluate the performance of policies with two metrics: (i). \wcar, which was formally defined in \Cref{alg:proof} for certified performance. (ii). total reward under MAD attacks \cite{zhang2020robust} for empirical performance.

\begin{figure*}[t]
    \centering
    \includegraphics[scale=0.45, bb=0 0 850 240]{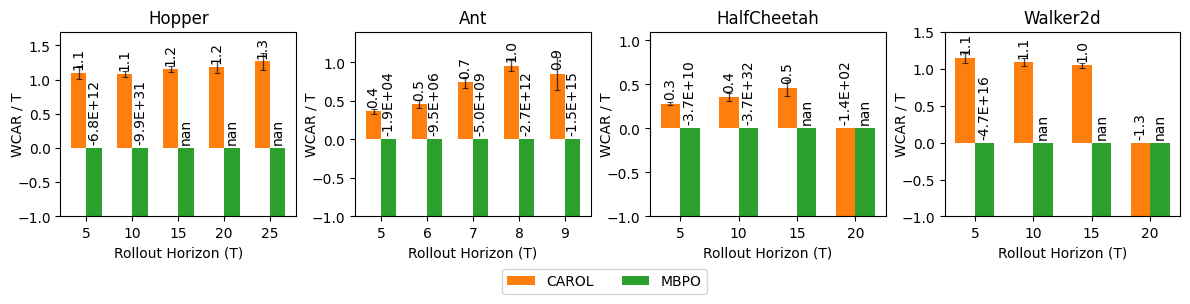}
    \caption{Certified performance of policies $\pi$ with the \textit{learned-together} model, $E$. To have a fair comparison across different horizons, we quantify the certified performance by $\text{\wcar}/T$, where \wcar is formally defined in \cref{alg:proof} and $T$ is the rollout horizon in \cref{alg:abs-rollout}. Each bar is an average of 25 starting states. \textit{nan} denotes \textit{not a number}, which means that ($\pi, E$) is not certifiable by a third-party verifier \cite{zhang2018efficient}. We use negative infinity to exhibit \textit{nan}'s value. A higher value indicates a better certified \textit{worst-case} performance. 
    The results are based on $\varepsilon_E$ with a $1 - \delta_{E}$ of $0.9$.}
    \label{fig:wcar-overall-0.90}
\vspace{-10pt}
\end{figure*}
\begin{figure*}[t]
    \centering
    \includegraphics[scale=0.45, bb=0 0 850 240]{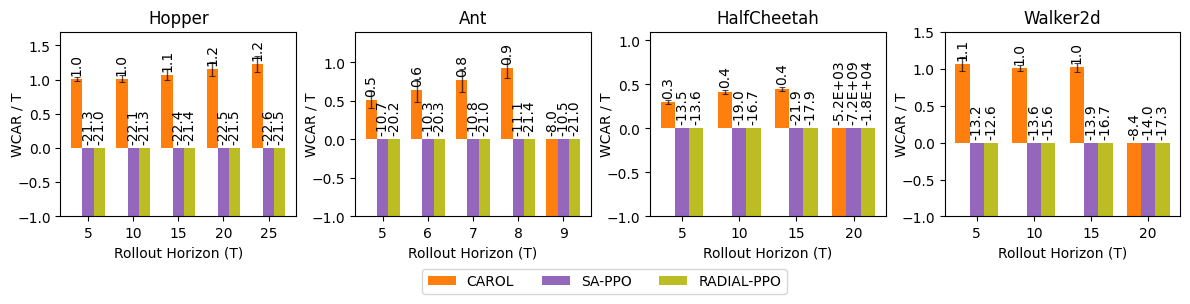}
    \caption{Certified performance of policies $\pi$ under a set of \textit{separately learned} models, $\{E_i\}$. Each bar averages the learned policies on each $E_i$ of 25 starting states. The results are based on $\varepsilon_E$ with a $1 - \delta_{E}$ of $0.9$.}
    \label{fig:wcar-separate-0.90}
\vspace{-10pt}
\end{figure*}

\textbf{RQ1: Certified Performance with Learned-together Certificate.} After training, we get a policy, $\pi$, and an environment model, $E$, trained with the policy. Then, we evaluate the \wcar following \Cref{alg:proof} with $\pi$ and $E$. Note that we use an $\epsilon_{\text{test}}=\frac{1}{255}$ for the evaluation of provability as certifying over long-horizon traces of neural network models tightly is a challenging task for abstract interpreters due to accumulated approximation error. The proof becomes more challenging as the horizon increases, primarily due to the consideration of the adversary's behavior in the most unfavorable scenarios at each step, and the step-wise impact from the worst-case adversary accumulates. We vary the certified horizon under the $\epsilon_{\text{test}}$ to exhibit the certified performance.  

\Cref{fig:wcar-overall-0.90} exhibits the certified performance of \sys. Both \sys and MBPO are evaluated with the model trained together. We are able to train a policy with better certified accumulative reward under the worst attacks compared to the base algorithm, MBPO, which does not use the regularization $\safetyLoss$. As the time horizon increases, it becomes harder to certify the accumulative reward. For example, in Ant and HalfCheetah, \sys is not able to give a good certified performance when the horizon reaches $10$ and $20$ respectively because of the accumulative influence from the worst-case attack and the overapproximation from the abstract interpreter. We also highlight that Ant is a challenging task for certification due to the high-dimensional state space.

\textbf{RQ2: Comparison of Certified Performance with Other Methods.} We compare \sys with two robust RL methods, SA-PPO \cite{zhang2020robust} and RADIAL-PPO \cite{oikarinen2021robust}, which both bound the per-step performance of the policy during training. SA-PPO bounds the per-step action deviation under perturbation, and RADIAL-PPO bounds the one-step loss under perturbation. \textit{To have a fair comparison of the certified performance of policies and alleviate the impact from model error bias across methods, we separately train 5 additional environment models, $\{E_i\}$, with the trajectory datasets unrolled from 5 additional random policies and the environment.} \red{Details are deferred to \cref{app:model-error}}. We truncate \sys by extracting the policies from training and certify them with these separately trained environment models. This setting is not completely inline with \sys's learned certificate and verification (see RQ1) but is designed for a fair comparison across policies.

As shown in \Cref{fig:wcar-separate-0.90}, the \sys's certified performance with separately trained models is slightly worse yet comparable to its performance when using learned-together certificates. Compared with non-certified RL policies, \sys consistently exhibits better certifiable performance. It is worth noting that \sys is able to provide worst-case rewards over time for benchmarks aligning with the reward mechanisms used in these environments. \red{We show the abstract trace lower bound ($R_{\text{min}}$) sampled from trajectories in \cref{app:quali-example}.} These results demonstrate that \sys is able to provide reasonable certified performance, while the other methods, which are not specifically designed for worst-case accumulative reward certification, struggle to attain the same goal.

\begin{wraptable}{r}{3.5in}
\vspace{-10pt}
{\small{
\begin{tabular}{@{}llll@{}} 
\toprule
& & Nominal & Attack (MAD) \\
Environment & Model & $\epsilon=0$ & $\epsilon=\epsilon_{\text{train}}$ \\
\midrule
\multirow{4}{*}{\makecell{Hopper \\ ($\epsilon_{\text{train}}=0.075$)}} & MBPO & 3246.0$\pm$76.1 & 2874.2$\pm$203.4\\
                                                          & SA-PPO & 3423.9$\pm$164.2 & \textbf{3213.8$\pm$284.8}\\
                                                          & RADIAL-PPO & \textbf{3547.0$\pm$166.9} & 3100.3$\pm$368.3\\ \cmidrule{2-4}
                                                          & CAROL & 3290.1$\pm$104.9 & 3201.4$\pm$100.5\\
\midrule
\multirow{4}{*}{\makecell{Ant \\ ($\epsilon_{\text{train}}=0.05$)}} & MBPO & 4051.9$\pm$526.2 & 406.2$\pm$83.5\\
                                                          & SA-PPO & 5368.8$\pm$96.4& 5327.4$\pm$112.7\\
                                                          & RADIAL-PPO & 4694.1$\pm$219.5 & 4478.9$\pm$232.8\\ \cmidrule{2-4}
                                                          & CAROL & \textbf{5696.6$\pm$277.9} & \textbf{5362.2$\pm$242.8}\\
\midrule
\multirow{4}{*}{\makecell{HalfCheetah \\ ($\epsilon_{\text{train}}=0.075$)}} & MBPO & \textbf{7706.3$\pm$710.1}& 2314.6$\pm$566.7\\
                                                          & SA-PPO & 3193.9$\pm$650.7& 3231.6$\pm$659.9\\
                                                          & RADIAL-PPO & 3686.5$\pm$439.2 & 3409.6$\pm$683.9\\ \cmidrule{2-4}
                                                          & CAROL & 5821.5$\pm$2401.9& \textbf{3961.6$\pm$899.5}\\
\midrule
\multirow{4}{*}{\makecell{Walker2d \\ ($\epsilon_{\text{train}}=0.05$)}} & MBPO & 3815.6$\pm$211.9 & 3616.5$\pm$228.2\\
                                                          & SA-PPO &\textbf{4271.7$\pm$222.2} & \textbf{4444.4$\pm$286.0}\\
                                                          & RADIAL-PPO & 2935.1$\pm$272.1 & 3022.6$\pm$381.7 \\ \cmidrule{2-4}
                                                          & CAROL & 3784.4$\pm$329.1 & 3774.3$\pm$260.3\\
\bottomrule
\end{tabular}
}
}
\caption{Average episodic reward $\pm$ standard deviation over 100 episodes on three baselines and \sys. We show natural rewards (under no attack) and rewards under adversarial attacks. The best results over all methods are in bold.}
\label{tab:adv-results}
\vspace{-10pt}
\end{wraptable}

\textbf{RQ3: Comparison of Empirical Performance with Other Methods.}
Usually, there is a trade-off between certified robustness and empirical robustness. One can get good provability but may sacrifice empirical rewards. We show that policy from our algorithm shows comparable natural rewards (without attack) and adversarial rewards compared with other methods. In \Cref{tab:adv-results}, we show results on 4 environments and comparison with MBPO, SA-PPO, and RADIAL-PPO. The policies are the same ones evaluated for RQ1 and RQ2. For each environment, we compare the performance under MAD attacks \cite{zhang2020robust}. \sys outperforms other methods on Ant and HalfCheetah under attacks when the base algorithm, MBPO, is extremely not robust. For Hopper, \sys has comparable adversarial rewards with the best methods. \sys's reward is worse on Walker2d though still reasonable.

\begin{figure*}[t]
\vspace{-15pt}
    \centering
    \includegraphics[scale=0.4, bb=0 0 1000 240]{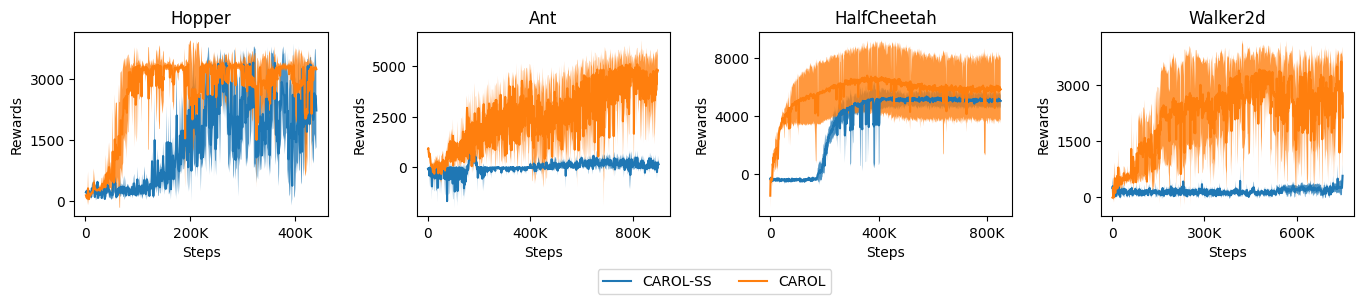}
    \caption{Training Curves of \sys and \sys-SS. The solid lines in the graph show the average natural rewards of five training trials, and the shaded areas represent the standard deviation among those trials.}
    \label{fig:curve}
\vspace{-15pt}
\end{figure*}

\textbf{RQ4: Impact of Model-Based Training.}
In this part, we investigate the impact of our design choices for $\policyLoss$ on performance. We compare our framework, \sys, with an ablation of it, \sys-SS, to understand how rollout with the learned model for $\policyLoss$ matters in \sys. We present a comparison of performance during training, as shown in \Cref{fig:curve}. In the implementation, we set a smoother $\epsilon$-schedule for \sys-SS by allowing \sys-SS to take longer steps from $\epsilon=0$ to the target $\epsilon$. These results show that \sys converges much faster while achieving a comparable or better final performance due to the benefits of the sample efficiency of MBRL. Additionally, the consistency between the rollout datasets for $\policyLoss$ and the ones for $\safetyLoss$ also leads to a better natural reward at convergence in training.

\section{Related Work}\label{sec:related}
\textbf{Adversarial RL.} \red{Adversarial attacks on RL systems have been extensively studied. Specific attacks include adversarial perturbations on agents' observations or actions \cite{huang2017adversarial, lin2017tactics, weng2019toward}, adversarial disturbance forces to the system \cite{pinto2017robust}, and other adversarial policies in a multiagent setting \cite{gleave2019adversarial}. Most recently, \cite{zhang2021robust} and \cite{sun2021strongest} consider an optimal adversary and propose methods to train agents together with a learned adversary in an online way to achieve a better adversarial reward.}

\textbf{Robust RL and Certifiable Robustness in RL.} Multiple robust training methods have been applied to deep RL. \red{Mankowitz et al. \cite{mankowitz2019robust} explore a broader adversarial setting related to model disturbances and model uncertainty.} Fischer et al. \cite{fischer2019online} leverage additional student networks to help the robust Q learning, and Everett et al. \cite{everett2021certifiable} enhance an agent's robustness during testing time by computing the lower bound of each action's Q value at each step. Zhang et al. \cite{zhang2020robust} and Oikarinen et al. \cite{oikarinen2021robust} leverage a bound propagation technique in a loss regularizer to encourage the agent to either follow its original actions or optimize over a loss lower bound. While these efforts achieve robustness by deterministic certification techniques for neural networks \cite{gowal2018effectiveness, xu2020automatic}, they mainly focus on the step-wise certification and are not able to give robustness certification if the impact from attacks accumulates across multiple steps. \sys differs from these papers by offering certified robustness for the aggregate reward in an episode. We know of only two recent efforts that study robustness certification for cumulative rewards.
The first, by Wu et al. \cite{wu2021crop}, gives a framework for certification rather than certified learning. The second, by Kumar et al. \cite{kumar2021policy},  proposes a certified learning algorithm under the assumption that the adversarial perturbation is smoothed using random noise. The attack model here is weaker than the adversarial model assumed by \sys and most other work on adversarial learning.  

\textbf{Certified RL.}
Safe control with learned certificates is an active field \cite{dawson2022safesurvey}. A few efforts in this space have considered controllers discovered through RL. Many works use a given certificate with strong control-theoretic priors to constrain the actions of an RL agent \cite{cheng2019end, li2019temporal, cheng2019control} or assume the full knowledge of the environment to yield the certificate during the training of an agent \cite{yang2021safe}. Chow et al. \cite{chow2019lyapunov,chow2018lyapunov} attempt to derive certificates from the structure of the constrained Markov decision process \cite{altman1999constrained} for the safe control problems. Chang et al. \cite{chang2021stabilizing} incorporate Lyapunov methods in deep RL to learn a neural Lyapunov critic function to improve the stability of an RL agent. We differ from this work by focusing on adversarial robustness rather than stability.

\section{Discussion} \label{sec:conclusion}
We have presented \sys, the first RL framework with certifiable episode-level robustness guarantees. Our approach is based on a new combination of model-based RL and abstract interpretation. We have given a theoretical analysis to justify the approach and validated it empirically in four challenging continuous control tasks.

\red{A key challenge in \sys is that our abstract interpreter may not be sufficiently precise, and attempts to increase precision may compromise scalability. Future research should work to address this issue with more accurate and scalable verification techniques.
Also, as seen in \cref{fig:curve}, there is variance in the results due to the use of a single dynamics model. To address this issue, future work should explore ways to incorporate certificates across ensembles of models. Moreover, because abstract interpretation of probabilistic systems is difficult, our approach assumes that the randomness in the environment transitions is state-independent. Future work should try to eliminate this assumption through abstract interpreters tailored to probabilistic systems. Finally, we focus on the state-adversarial setting, which of practical applications. However, there are broader adversarial settings encompassing model disturbances and model uncertainty. Exploring these areas within the context of certified learning is also an interesting future direction. A broader discussion of limitations appears in \cref{app:discuss}.}





\bibliographystyle{plainnat}
\bibliography{ref}


\newpage
\appendix
\onecolumn
\section{Symbols}
We give a summary of the symbols used in this paper below.
\begin{center}
\scalebox{0.75}{
\begin{tabular}{ c|c } 
\toprule
Definition & Symbol/Notation \\
\midrule
Policy & $\pi_{\psi}, \pi$  \\ 
Model of the environment & $E_{\theta}, E$  \\ 
Environment transition & $P$ \\
Parameters (mean, covariance) for Gaussian distribution for environment, model, and policy &  $\mu_{P}, \mSigma_{P}, \mu_{E}, \mSigma_{E}, \mu_{\pi}, \mSigma_{\pi}$ \\
Distribution representing noise for environment, model, and policy & $f_P, f_E, f_{\pi}$ \\
Adversary & $\adv$ \\
Regular policy loss & $\policyLoss$ \\
Robustness loss & $\safetyLoss$ \\
\midrule 
Lipschitz constants for the environment model and policy mean & $L_E$, $L_{\pi}$ \\
Model error & $\varepsilon_E$ \\
PAC bound probability & $\delta_E$ \\
\midrule
Abstract lifts & $\cdot^\#$ \\
Abstraction & $\alpha$ \\
Concretization & $\beta$ \\
\midrule
Horizon for training and testing & $T$, $T_{\text{train}}$, $T_{\text{test}}$\\
Disturbance for training and testing & $\epsilon$, $\epsilon_{\text{train}}$, $\epsilon_{\text{test}}$ \\
\bottomrule
\end{tabular}
}
\end{center}
\section{Proofs}\label{app:proofs}

In this section we present proofs of the theorems from Section~\ref{sec:the}.

\begin{assumption}\label{asm:bounded-horizon}
The horizon of the MDP is bounded by $T$.
\end{assumption}

\begin{assumption}\label{asm:env-gaussian}
The environment transition distribution has the form $P \left ( s' \mid s, a \right ) = \norm \left ( \mu_P \left (s, a \right ), \mSigma_P \right )$ with $\mSigma_P$ diagonal and the environment model $E$ has the form $E \left (s' \mid s, a \right ) = \norm \left ( \mu_E(s, a), \mSigma_E \right )$ with $\mSigma_E$ diagonal.
\end{assumption}

\begin{assumption}\label{asm:env-close}
There exist values $\varepsilon_E$ and $\delta_E$ such that for all $s, a$ and for any fixed $e$ with probability at least $1 - \delta_E$, $\left \| \left ( \mu_P \left ( s, a \right ) + \mSigma_P^{1/2} e \right ) - \left ( \mu_E \left ( s, a \right ) + \mSigma_E^{1/2} e \right ) \right \| \le \varepsilon_E$. Further, there exists some $d_E$ such that for all $s, a$, $\left \|  \left ( \mu_P \left ( s, a \right ) + \mSigma_P^{1/2} e \right ) - \left ( \mu_E \left ( s, a \right ) + \mSigma_E^{1/2} e \right ) \right \| \le d_E$.
\end{assumption}

\begin{assumption}\label{asm:lipschitz}
The environment model mean function $\mu_E \left ( s, a \right )$ is $L_E$-Lipschitz continuous, the immediate reward function $r \left ( s, a \right )$ is $L_r$-Lipschitz continuous, and the policy mean $\mu_\pi \left ( s \right )$ is $L_\pi$-Lipschitz continuous.
\end{assumption}

\begin{assumption}\label{asm:adv-center}
For all $s \in \states$, we have $s \in B \left ( s \right )$. That is, the adversary may choose not to perturb any state.
\end{assumption}

\begin{reptheorem}[\ref{thm:sound}]
For any policy $\pi$, let the result of Algorithm~\ref{alg:proof} be $\hat{R}^\#$, let $\adv^*$ be the optimal adversary (i.e., for all $\adv \in \advset_B$, $R \left ( \pi \circ \adv^* \right ) \le R \left ( \pi \circ \adv \right )$), and let the reward of $\pi \circ \adv^*$ be $R$. Then for any $\delta > 0$ with probability at least $1 - \delta$, we have
\[ R \ge \hat{R}^\# \left ( \tau \right ) - \frac{1}{\sqrt{\delta}} \sqrt{\frac{\var \left [ R^\# \right ]}{N}} - \left ( 1 - \left ( 1 - \delta_E \right )^T \right ) L_r (1 + L_\pi) d_E \frac{ \left ( L_E L_\pi \right )^T + \left ( 1 - L_E L_\pi \right ) T - 1}{ \left ( 1 - L_E L_\pi \right )^2}. \]
\end{reptheorem}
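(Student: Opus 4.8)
\textbf{Proof proposal for Theorem~\ref{thm:sound}.}

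The plan is to decompose the gap $R - \hat{R}^\#(\tau)$ into two independent sources of error: (i) the Monte-Carlo error from estimating an expectation by the $N$-sample average in Algorithm~\ref{alg:proof}, and (ii) the model error incurred by rolling out under $E$ rather than $P$ in Algorithm~\ref{alg:abs-rollout}. First I would set up the coupling described in the paragraph before Algorithm~\ref{alg:proof}: fix the randomness $e_i \sim f_\pi$ and $e_i' \sim f_E$, so that for each resolution of this randomness (and each sampled $s_0$) Algorithm~\ref{alg:abs-rollout} returns a number $R^\#$ that is a \emph{deterministic} function of these draws. The key structural fact, which follows from the soundness of abstract interpretation (each $f^\#$ overapproximates $f$, and $B(\cdot^\#)$ overapproximates all perturbations), is that $\inf\beta(R_{\min_t}^\#)$ lower-bounds the reward that the optimal adversary $\adv^*$ achieves \emph{on that coupled trajectory in the model} $E$. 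Taking expectations over the coupled randomness, $\E[R^\#]$ lower-bounds the optimal-adversary reward in the model, and $\hat R^\# = \frac1N\sum_t R_{\min_t}$ is an unbiased estimator of $\E[R^\#]$.

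Next I would handle the Monte-Carlo term. Since $\hat R^\#$ is the average of $N$ i.i.d.\ copies of $R^\#$, Chebyshev's inequality gives $\Pr\bigl[\E[R^\#] - \hat R^\# > t\bigr] \le \var[R^\#]/(N t^2)$; setting the right-hand side to $\delta$ yields $\E[R^\#] \ge \hat R^\# - \frac{1}{\sqrt\delta}\sqrt{\var[R^\#]/N}$ with probability at least $1-\delta$. This produces the second term of the bound exactly.

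The third term is the model-error term and is where the real work lies. I would compare, along the coupled randomness, the model rollout to the true-environment rollout under the \emph{same} adversary $\adv^*$. By Assumption~\ref{asm:env-close}, at each of the $T$ steps the one-step discrepancy between $(\mu_P + \mSigma_P^{1/2}e)$ and $(\mu_E + \mSigma_E^{1/2}e)$ is at most $\varepsilon_E$ with probability $\ge 1-\delta_E$, and always at most $d_E$; a union bound over the $T$ steps gives that with probability at least $(1-\delta_E)^T$ the small error $\varepsilon_E$ holds at every step (and in Algorithm~\ref{alg:abs-rollout} this $\varepsilon_E$-ball is explicitly added in, so those trajectories are accounted for), while on the complementary event of probability $1-(1-\delta_E)^T$ we fall back to the crude bound $d_E$ per step. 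On this bad event I would propagate the per-step state error through $T$ steps: using $L_E$-Lipschitzness of $\mu_E$ and $L_\pi$-Lipschitzness of $\mu_\pi$, the state error after $k$ steps grows like a geometric sum with ratio $L_EL_\pi$ (each step the adversary's perturbation contributes through $\mu_\pi$, hence the $(1+L_\pi)$ factor, and the accumulated error is amplified by $L_EL_\pi$), and then $L_r$-Lipschitzness of $r$ converts the state error at each step into a reward error. Summing the resulting geometric-type series over $t=1,\dots,T$ gives exactly
\[
L_r(1+L_\pi)\, d_E\, \frac{(L_EL_\pi)^T + (1-L_EL_\pi)T - 1}{(1-L_EL_\pi)^2},
\]
which, weighted by the bad-event probability $1-(1-\delta_E)^T$, is the third term $C$ in the statement. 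Combining the two error terms with a union bound over the two failure events completes the proof.

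The main obstacle I anticipate is the bookkeeping in the model-error propagation: one must carefully track how a perturbation at step $i$ feeds through $\mu_\pi$ and then $\mu_E$, how errors from earlier steps compound multiplicatively by $L_EL_\pi$, and how summing the per-step reward errors $\sum_{t=1}^T \sum_{j\le t}(L_EL_\pi)^{j}$ collapses to the stated closed form; getting the $(1+L_\pi)$ factor and the double-geometric numerator exactly right is the delicate part, whereas the Chebyshev step and the abstract-interpretation soundness step are routine given the setup in Section~\ref{sec:bg}.
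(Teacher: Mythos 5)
Your proposal is correct and follows essentially the same route as the paper's proof: couple the abstract and concrete rollouts through a shared resolution of the policy and transition noise, split on the event that the $\varepsilon_E$-ball captures the true transition at every step (probability $(1-\delta_E)^T$), bound the reward gap on the complementary event by propagating $d_E$ through the $L_E L_\pi$-Lipschitz dynamics and the $L_r(1+L_\pi)$ reward sensitivity, and finish with Chebyshev for the $N$-sample average. One small correction: the $(1-\delta_E)^T$ good-event probability is obtained by multiplying per-step conditional probabilities in an induction, not by a union bound over the $T$ steps (which would only give $1 - T\delta_E$ and hence a bad-event weight of $T\delta_E$ rather than the stated $1-(1-\delta_E)^T$).
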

\begin{proof}
Recall that the environment transition $P$ and policy $\pi$ are assumed to be separable, i.e., $P \left (s' \mid s, a \right ) = \mu_P \left (s, a \right ) + f_P \left ( s' \right )$ and $\pi \left ( a \mid s \right ) = \mu_\pi \left ( s \right ) + f_\pi \left ( a \right )$ with $\mu_P$ and $\mu_\pi$ deterministic. As a result, a trajectory under policy $\pi \circ \adv^*$ may be written $\tau = s_0, a_0, s_1, a_1, \ldots, s_n, a_n$ where $s_0 \sim \states_0$, each $a_i = \mu_\pi \left ( \adv^* \left ( s_i \right ) \right ) + e^\pi_i$ for $e^\pi_i \sim f_\pi \left ( a \right )$, and each $s_i = \mu_P \left ( s_{i-1}, a_{i-1} \right ) + e^P_i$ for $e^P_i \sim f_P \left ( s' \right )$. By Assumption~\ref{asm:env-gaussian}, we know that $e^P_i \sim \norm \left ( \mZero, \mSigma_P \right )$ so that $e^P_i = \mSigma_P^{1/2} e_i$ where $e_i \sim \norm \left ( 0, \mI \right )$. In particular, because each trajectory $\tau$ is uniquely determined by $s_0, \{e^\pi_i\}_{i=0}^n, \{e_i\}_{i=1}^n$, we can write the reward of $\pi \circ \adv^*$ as
\[R \left ( \pi \circ \adv^* \right ) = \expec_{s_0 \sim \states_0, \left \{ e^\pi_i \sim f_\pi \left ( a \right ) \right \}_{i=0}^n, \left \{ e_i \sim \norm \left ( \mZero, \mI \right ) \right \}_{i=1}^n} R \left ( \tau \right ) \]
Because this expectation ranges over the values of $s_0, \left \{ e^\pi_i \right \}_{i=0}^n, \left \{ e_i \right \}_{i=1}^n$, we will proceed by considering pairs of abstract and concrete trajectories unrolled with the same starting state and noise terms.

To do this, we analyze Algorithm~\ref{alg:abs-rollout} for some fixed $s_0, \left \{ e^\pi_i \right \}_{i=0}^n, \left \{ e_i \right \}_{i=1}^n$. Let $e^E_i = \mSigma_E^{1/2} e_i$. That is, given the same underlying sample from $\norm \left ( 0, \mI \right )$, $e^P_i$ is the noise in the true environment while $e^E_i$ is the noise in the modeled environment. We show by induction that for all $i$, $s_i \in \conc \left ( s_{\text{original}_i}^\# \right )$ with probability at least $ \left ( 1 - \delta_E \right )^i$. Note that, because abstract interpretation is sound, $s_0 \in \conc \left ( s_{\text{original}_0}^\# \right )$. Additionally, for all $i$ if $s_i \in \conc \left ( s_{\text{original}_i}^\# \right )$ then $\adv^* \left ( s_i \right ) \in \conc \left ( s_{\text{obs}_i}^\# \right )$. Moreover, since $e^\pi_i$ is fixed, we have
\[\pi \left ( s_{\text{obs}_i}^\# \right ) = \mu_\pi \left ( s_{\text{obs}_i}^\# \right ) + e^\pi_i \]
so that $\pi \left ( \adv^* \left ( s \right ) \right ) \in \conc \left ( a_i^\# \right )$. Similarly, because $e^E_i$ is fixed, let $\Delta_E = \alpha \left ( \left \{ x \mid \left \| x \right \| \le \varepsilon_E \right \} \right )$ and we have
\[ E \left ( s_{\text{original}_i}^\#, a_i^\# \right ) + \Delta_E = \mu_E \left ( s_{\text{original}_i}^\#, a_i^\# \right ) + e^E_i + \Delta_E . \]
By the induction hypothesis, we know that $s_{i-1} \in \conc \left ( s_{\text{original}_{i-1}}^\# \right )$ with probability at least $\left ( 1 - \delta_E \right )^{i-1}$ and therefore $a_{i-1} \in \conc \left ( a_{i-1}^\# \right )$. By Assumption~\ref{asm:env-close}, we have that $\left \| \left ( \mu_P \left ( s, a \right ) + e^P_i \right ) -  \left ( \mu_E \left ( s, a \right ) + e^E_i \right ) \right \| < \varepsilon_E$ with probability at least $1 - \delta_E$. In particular, $\left ( \mu_P \left ( s, a \right ) + \varepsilon^P_i \right ) - \left ( \mu_E \left ( s, a \right ) + \varepsilon^E_i \right ) \in \Delta_E$, so that
$\mu_P \left ( s, a \right ) + e^P_i \in \conc \left ( E \left ( s_{\text{original}_i}^\#, a_i^\# \right ) + \Delta_E \right )$. Then with probability at least $1 - \delta_E$, if $s_{i-1} \in \conc \left ( s_{\text{original}_{i-1}}^\# \right )$ then $s_i \in \conc \left ( s_{\text{original}_i}^\# \right )$. As a result, $s_i \in \conc \left ( s_{\text{original}_i}^\# \right )$ with probability at least $\left ( 1 - \delta_E \right )^i$. In particular, by Assumption~\ref{asm:bounded-horizon}, $n \le T$ so that for a fixed $\tau$ defined by $s_0, \left \{ e^\pi_i \right \}_{i=0}^n, \left \{ e_i \right \}_{i=1}^n$, we have that with probability at least $\left ( 1 - \delta_E \right )^T$, Algorithm~\ref{alg:abs-rollout} returns a lower bound on $R \left ( \tau \right)$.

Now we consider the case where Algorithm~\ref{alg:abs-rollout} does \emph{not} return a lower bound of $R \left ( \tau \right )$. In this case, we show (again by induction) that for all $0 \le i \le T$, there exists a point $s_i' \in \conc \left ( s_{\text{original}_i}^\# \right )$ such that
\[ \left \| s_i - s_i' \right \| \le \sum_{j=0}^{i-1} \left ( L_E L_\pi \right )^j d_E = d_E \left ( \frac{1 - \left ( L_E L_\pi \right )^{i-1}}{1 - L_E L_\pi} \right ) \]
(when $\sum_{j=0}^{-1} \left ( L_E L_\pi \right )^j d_E$ is taken to be zero). First, note that $s_0 \in \conc \left ( s_{\text{original}_0}^\# \right )$, so the base case is trivially true. Now by the induction hypothesis we have that there exists some $s_{i-1}' \in \conc \left ( s_{\text{original}_{i-1}}^\# \right )$ with $\left \| s_{i-1} - s_{i-1}' \right \| \le \sum_{j=0}^{i-2} \left ( L_E L_\pi \right )^j d_E$. Notice that by Assumption~\ref{asm:adv-center}, we also have $s_{i-1}' \in \conc \left ( s_{\text{obs}_{i-1}}^\# \right )$. Now because abstract interpretation is sound, we have that $\mu_\pi \left ( s_{i-1}' \right ) + e^\pi_{i-1} \in \conc \left ( a_{i-1}^\# \right )$ and by Assumption~\ref{asm:lipschitz}, $\| \mu_\pi \left ( s_{i-1} \right ) - \mu_\pi \left ( s_{i-1}' \right ) \| \le L_\pi \sum_{j=0}^{i-2} \left ( L_E L_\pi \right )^j d_E$. Similarly, we have $\mu_E \left ( s_{i-1}', \mu_\pi \left ( s_{i-1}' \right ) + e^\pi_{i-1} \right ) + e^E_i \in \conc \left ( s_{\text{original}_i}^\# \right )$, and $\left \| \mu_E \left ( s_{i-1}, \mu_\pi \left ( s_{i-1} \right ) + e^\pi_{i-1} \right ) - \mu_E \left ( s_{i-1}', \mu_\pi \left ( s_{i-1}' \right ) + e^\pi_{i-1} \right ) \right \| \le L_E L_\pi \sum_{j=0}^{i-2} \left ( L_E L_\pi \right )^j d_E$. Let $\hat{s}_i = \mu_E \left ( s_{i-1}, \mu_\pi\left ( s_{i-1} \right ) + \varepsilon_{i-1}^\pi \right ) + \varepsilon^E_i$. Then by Assumption~\ref{asm:env-close}, we have $\left \| \hat{s}_i - s_i \right \| \le d_E$, so that in particular $\left \| s_i - \mu_E \left ( s_{i-1}', \mu_\pi \left ( s_{i-1}' \right ) + e^\pi_{i-1} \right ) + \varepsilon^E_i \right \| \le d_E + L_E L_\pi \sum_{j=0}^{i-2} \left ( L_E L_\pi \right )^j d_E$. Letting $s_i' = \mu_E \left ( s_{i-1}', \mu_\pi \left ( s_{i-1}' \right ) + e^\pi_{i-1} \right ) + \varepsilon^P_i$, we have the desired result.

We use this result to bound the difference in reward between the abstract and concrete rollouts when Algorithm~\ref{alg:abs-rollout} does not return a lower bound. For each $i$, because $s_i' \in \conc \left ( s_{\text{original}_i}^\# \right )$ and $\mu_\pi \left ( s_i' \right ) + e^\pi_i \in a_i^\#$, we define $r_i' = r \left ( s_i', \mu_\pi \left ( s_i' \right ) + e^\pi_i \right )$ and we know that $r' \in r_i^\#$. Because $\left \| s_i - s_i' \right \| \le d_E \left ( \frac{1 - \left ( L_E L_\pi \right )^{i-1}}{1 - L_E L_\pi} \right )$ we have $\left \| a_i - a_i' \right \| \le L_\pi d_E \left ( \frac{1 - \left ( L_E L_\pi \right )^{i-1}}{1 - L_E L_\pi} \right ) $ and $\left | r \left ( s_i, a_i \right ) - r_i' \right | \le L_r (1 + L_\pi) d_E \left ( \frac{1 - \left ( L_E L_\pi \right )^{i-1}}{1 - L_E L_\pi} \right )$. In particular,
let $R' = \sum_i r_i'$ and then
\begin{align*}
\left | R \left ( \tau \right ) - R' \right | &\le \sum_{i=1}^T L_r (1 + L_\pi) d_E \left ( \frac{1 - \left ( L_E L_\pi \right )^{i-1}}{1 - L_E L_\pi} \right ) \\
&= L_r (1 + L_\pi) d_E \frac{ \left ( L_E L_\pi \right )^T + \left ( 1 - L_E L_\pi \right ) T - 1}{\left ( 1 - L_E L_\pi \right )^2}.
\end{align*}

We now combine these two cases to bound the expected difference between the reward returned by Algorithm~\ref{alg:abs-rollout}, denoted $R^\# \left ( \tau \right )$, and the reward of $\tau$. Let $D = R^\# \left ( \tau \right ) - R \left ( \tau \right )$ be a random variable representing this difference. Then with probability at least $\left ( 1 - \delta_E \right )^T$, $D \le 0$ and in all other cases (i.e., with probability no greater than $1 - \left ( 1 - \delta_E \right )^T$), $D \le L_R (1 + L_\pi) d_E \frac{\left ( L_E L_\pi \right )^T + \left ( 1 - L_E L_\pi \right ) T - 1}{\left ( 1 - L_E L_\pi \right )^2}$. In particular then,
\[ \expec \left [ D \right ] \le \left ( 1 - \left ( 1 - \delta_E \right )^T \right ) L_r (1 + L_\pi) d_E \frac{\left ( L_E L_\pi \right )^T + \left ( 1 - L_E L_\pi \right ) T - 1}{\left ( 1 - L_E L_\pi \right )^2}. \]
By definition $\expec \left [ R^\# \left ( \tau \right ) \right ] = \expec \left [ R \left ( \tau \right ) \right ] + \expec \left [ D \right ]$. Therefore, we have
\begin{equation}\label{eq:difference-bound}
\begin{split}
\expec \left [ R \left ( \tau \right ) \right ] &= \expec \left [ R^\# \left ( \tau \right ) \right ] - \expec \left [ D \right ] \ge \expec \left [ R^\# \left ( \tau \right ) \right ] \\
&- \left ( 1 - \left ( 1 - \delta_E \right )^T \right ) L_r (1 + L_\pi) d_E \frac{\left ( L_E L_\pi \right )^T + \left ( 1 - L_E L_\pi \right ) T - 1}{\left ( 1 - L_E L_\pi \right )^2}.
\end{split}
\end{equation}

Algorithm~\ref{alg:main} approximates $\expec \left [ R^\# \left ( \tau \right ) \right ]$ by sampling $N$ values. Let $\hat{R}^\# \left ( \tau \right )$ be the measured mean and recall $\expec \left [ \hat{R}^\# \left ( \tau \right ) \right ] = \expec \left [ R^\# \left ( \tau \right ) \right ]$ and $\var \left [ \hat{R}^\# \left ( \tau \right ) \right ] = \var \left [ R^\# \left ( \tau \right ) \right ] / N$. Then by Chebyshev's inequality we have the for all $k > 0$, $\prob \left [ \left | \hat{R}^\# \left ( \tau \right ) - \expec \left [ R^\# \left (\tau \right ) \right ] \right | \ge k \sqrt{\var \left [ \hat{R}^\# \left ( \tau \right ) \right ]} \right ] \le 1 / k^2$. Then in particular, with probability at least $1 - 1 / k^2$,
\[ \hat{R}^\# \left ( \tau \right ) - k \sqrt{\frac{\var \left [ R^\# \left ( \tau \right ) \right ]}{N}} \le R^\# \left ( \tau \right ). \]
Combining this with Equation~\ref{eq:difference-bound} above and letting $k = 1 / \sqrt{\delta}$, we have with probability at least $1 - \delta$,
\begin{align*}
\expec \left [ R \left ( \tau \right ) \right ] &\ge \hat{R}^\# \left ( \tau \right ) - \frac{1}{\sqrt{\delta}} \sqrt{\frac{\var \left [ R^\# \left ( \tau \right ) \right ]}{N}} \\
&- \left ( 1 - \left ( 1 - \delta_E \right )^T \right ) L_r (1 + L_\pi) d_E \frac{ \left ( L_E L_\pi \right )^T + \left ( 1 - L_E L_\pi \right ) T - 1}{ \left ( 1 - L_E L_\pi \right )^2}.
\end{align*}
\end{proof}

While this paper focuses on continuous state and action spaces, we can extend our main theoretical result to discrete state and action spaces if the environment is deterministic. For this analysis, we maintain Assumptions~\ref{asm:bounded-horizon} and~\ref{asm:adv-center}, but we add a few new assumptions for the discrete setting.

\begin{assumption}\label{asm:discr-env-close}
    The environment model $E$ is deterministic and $E(s, a) = P(s, a)$ with probability at least $1 - \delta_E$.
\end{assumption}

\begin{assumption}\label{asm:discr-reward-bound}
    The single-step reward for any state $s$ and action $a$ is bounded by $r_{\min} \le r(s, a) \le r_{\max}$.
\end{assumption}

\begin{theorem}
For a deterministic policy $\pi$, let the result of Algorithm~\ref{alg:proof} be $\hat{R}^\#$, let $\nu^*$ be the optimal adversary, and let the reward of $\pi \circ \nu^*$ be $R$. Then for any $\delta$, with probability at least $1 - \delta$,
\[ R \ge \hat{R}^\# - \frac{1}{\sqrt{\delta}} \sqrt{ \frac{ \var[R^\#]}{N} } - T (r_{\max} - r_{\min}). \]
\end{theorem}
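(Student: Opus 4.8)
The plan is to follow the proof of Theorem~\ref{thm:sound}, specialized to the deterministic discrete setting, with the Lipschitz drift bound replaced by the trivial range bound of Assumption~\ref{asm:discr-reward-bound}. Since $\pi$ and $P$ are deterministic, a trajectory $\tau = s_0, a_0, s_1, \ldots$ under $\pi \circ \nu^*$ is fully determined by its start state $s_0 \sim \states_0$, and for each fixed $s_0$ the value $R^\#(\tau)$ returned by Algorithm~\ref{alg:abs-rollout} is a deterministic function of $s_0$ (take $\varepsilon_E = 0$ in line~\ref{line:p2}). So the outer loop of Algorithm~\ref{alg:proof} draws $N$ i.i.d.\ samples of a random variable $R^\#$, and $\hat R^\#$ is their sample mean, with $\expec[\hat R^\#] = \expec[R^\#]$ and $\var[\hat R^\#] = \var[R^\#]/N$.

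First I would show, exactly as in the proof of Theorem~\ref{thm:sound} and under the same convention for the per-step agreement events of Assumption~\ref{asm:discr-env-close}, that for a fixed $s_0$ the concrete trajectory under $\pi \circ \nu^*$ is captured by the abstract trace $s^\#_0, s^\#_1, \ldots$ up to step $i$ with probability at least $(1-\delta_E)^i$. The base case is soundness of $\alpha$. For the inductive step, if $s_i \in \conc(s^\#_i)$ then $\nu^*(s_i) \in B(s_i) \subseteq \conc(s^\#_{\text{obs},i})$, where $s^\#_{\text{obs},i}$ is the abstract perturbed state at step $i$; hence by soundness of the abstract transformers for the deterministic $\pi$ and $r$ we get $a_i \in \conc(a^\#_i)$ and $r(s_i, a_i) \in \conc(r^\#_i)$, and by Assumption~\ref{asm:discr-env-close}, $s_{i+1} = P(s_i, a_i) = E(s_i, a_i) \in \conc(s^\#_{i+1})$ with probability at least $1 - \delta_E$ at that step. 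By Assumption~\ref{asm:bounded-horizon} the horizon is at most $T$, so with probability at least $(1-\delta_E)^T$ every state of $\tau$ lies in the corresponding abstract state; when this holds, $R^\#(\tau) = \sum_i \inf\conc(r^\#_i) \le \sum_i r(s_i, a_i) = R(\tau)$, i.e., $D := R^\#(\tau) - R(\tau) \le 0$.

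In the complementary case I would not track how far the model trajectory drifts; instead I use Assumption~\ref{asm:discr-reward-bound} directly. On one hand $R(\tau) = \sum_i r(s_i, a_i) \ge T r_{\min}$. On the other hand, every abstract state along the trace is nonempty (the transformers are total and $B(s) \ne \emptyset$ by Assumption~\ref{asm:adv-center}, so no $\bot$ arises), so $\conc(r^\#_i)$ contains some genuine value $r(s, a) \le r_{\max}$ and hence $\inf\conc(r^\#_i) \le r_{\max}$, giving $R^\#(\tau) = \sum_i \inf\conc(r^\#_i) \le T r_{\max}$. Thus $D \le T(r_{\max} - r_{\min})$ unconditionally, and combining the two cases, $\expec[D] \le (1 - (1-\delta_E)^T)\, T(r_{\max}-r_{\min}) \le T(r_{\max}-r_{\min})$. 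Since $\expec[R(\tau)] = \expec[R^\#] - \expec[D]$, this gives $\expec[R(\tau)] \ge \expec[R^\#] - T(r_{\max}-r_{\min})$. Finally, Chebyshev's inequality with $k = 1/\sqrt{\delta}$ yields, with probability at least $1 - \delta$, $\expec[R^\#] \ge \hat R^\# - \frac{1}{\sqrt{\delta}} \sqrt{\var[R^\#]/N}$; chaining this with the previous inequality and $R = R(\pi \circ \nu^*) = \expec_{s_0}[R(\tau)]$ gives the stated bound.

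This argument is a strictly simpler transcription of the proof of Theorem~\ref{thm:sound}, so I expect no serious obstacle. The one point that deserves care is the unconditional bound $D \le T(r_{\max}-r_{\min})$ in the bad case: it relies on every abstract state along the trace being nonempty, so that every abstract reward $r^\#_i$ satisfies $\inf\conc(r^\#_i) \le r_{\max}$, which, together with the crude range bound $R(\tau) \ge T r_{\min}$, is precisely what lets us drop the Lipschitz machinery of Theorem~\ref{thm:sound}. One should also note, as in that proof, that the per-step model-correctness events compose multiplicatively into $(1-\delta_E)^T$ rather than being handled by a union bound, so the final additive penalty depends on $T$ only through the reward-range factor $T(r_{\max}-r_{\min})$.
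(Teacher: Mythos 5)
Your proposal is correct and follows essentially the same route as the paper's own proof: the same induction showing the concrete trajectory is captured by the abstract trace with probability at least $(1-\delta_E)^T$, the same crude range bound $D \le T(r_{\max}-r_{\min})$ in the failure case via Assumption~\ref{asm:discr-reward-bound}, and the same Chebyshev step. Your explicit justification that $\inf\conc(r_i^\#) \le r_{\max}$ via nonemptiness of the abstract states is a small clarification the paper leaves implicit, but the argument is otherwise identical.
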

\begin{proof}
Consider a trajectory $\tau = s_0, a_0, s_1, a_1, \ldots, s_n, a_n$ where $s_0 \sim \states_0$, each $a_i = \pi(s_i)$, and each $s_{i+1} = P(s_i, a_i)$. Note that because the dynamics of the environment and the policy are deterministic, the only randomness in the trajectory comes from sampling the initial state. Then $R(\pi \circ \nu^*) = \E_{s_0 \sim \states_0} R(\tau)$. Similar to the proof of Theorem~\ref{thm:sound}, we proceed by considering pairs of abstract and concrete trajectories unrolled from the same starting state.

We show by induction that for all $i$, $s_i \in \conc \left ( s_{\text{original}_i}^\# \right )$ with probability at least ${ \left ( 1 - \delta_E \right ) }^i$. For the base case, note that because abstract interpretation is sound, $s_0 \in \conc \left ( s_{\text{original}_0}^\# \right )$. Additionally, for all $i$, if $s_i \in \conc \left ( s_{\text{original}_i}^\# \right )$ then $\nu^*(s_i) \in \conc \left ( s_{\text{obs}_i}^\# \right )$ and $\pi(\nu^*(s_i)) \in \conc \left ( a_i^\# \right )$. From the induction hypothesis, we have $s_{i-1} \in \conc \left ( s_{\text{original}_{i-1}}^\# \right )$ with probability at least $\left ( 1 - \delta_E \right )^{i-1}$. By Assumption~\ref{asm:discr-env-close}, we have $s_{i+1} = E(s_i, a_i)$ with probability at least $1 - \delta_E$. Thus if $s_{i-1} \in \conc \left ( s_{\text{original}_{i-1}}^\# \right )$ then with probability at least $1 - \delta_E$ we know $s_i \in \conc \left ( s_{\text{original}_i}^\# \right )$. Combined with the induction hypothesis, this implies that $s_i \in \conc \left ( s_{\text{original}_i}^\# \right )$ with probability at least ${(1 - \delta_E)}^i$. Now by Assumption~\ref{asm:bounded-horizon}, $n \le T$ so that for a fixed $\tau$ from a starting state $s_0$, we have that with probability at least ${(1 - \delta_E)}^T$, Algorithm~\ref{alg:proof} returns a lower bound on $R(\tau)$.

As in the proof of Theorem~\ref{thm:sound}, we now turn to the case where Algorithm~\ref{alg:proof} does not return a lower bound of $R(\tau)$. In this case, let $r_i$ be the true adversarial reward at time step $i$. Then by Assumption~\ref{asm:discr-reward-bound}, we have $r_i \ge r_{\min}$, and $\inf \conc \left ( r_i^\# \right ) \le r_{\max}$. Thus in particular, letting $R^\#(\tau)$ represent the bound returned by Algorithm~\ref{alg:proof}, we have $R^\#(\tau) - R(\tau) \le T (r_{\max} - r_{\min})$.

Now letting $D = R^\#(\tau) - R(\tau)$, we have that with probability at least ${(1 - \delta_E)}^T$, $D \le 0$ and in all other cases $D \le T (r_{\max} - r_{\min})$. In particular,
\[\E[D] \le \left (1 - { \left ( 1 - \delta_E \right )}^T \right ) T \left ( r_{\max} - r_{\min} \right ).\]
By definition, $\E[R^\#(\tau)] = \E[R(\tau)] + \E[D]$ so that
\[\E[R^\#(\tau)] = \E[R^\#(\tau)] - \left ( 1 - { \left ( 1 - \delta_E \right ) }^T \right ) T (r_{\max} - r_{\min}).\]
Following the same sampling argument we make in the proof of Theorem~\ref{thm:sound}, we have that for any $\delta$, with probability at least $1 - \delta$,
\[ \E[R(\tau)] \ge \hat{R}^\#(\tau) - \frac{1}{\sqrt{\delta}} \sqrt{ \frac{ \var[R^\#(\tau)]}{N} } - T (r_{\max} - r_{\min}). \]
\end{proof}
\section{Experiment Details and More Results} \label{app:imple}
\subsection{Training Details}\label{app:training}
We run our experiments on Quadro RTX 8000 and Nvidia T4 GPUs. We define the perturbation set $B(s)$ to be an $l_\infty$ norm perturbation of the state with radius $\epsilon$: $B_p(s, \epsilon) := \{s' | \| s' - s \| \le \epsilon \}$ in the experiments. We use a smoothed linear $\epsilon$-schedule during training as in \cite{zhang2020robust, oikarinen2021robust}. For the environments, we use the MuJoCo environments in OpenAI Gym \cite{openai}. We use Hopper-v2, HalfCheetah-v2, Ant-v2, Walker2d-v2 with 1000 trial lengths. 

\paragraph{Network Structures and Hyperparameters for Training} \label{app:hyperparameters}
For both policy networks and the model networks, we use the same network as in \cite{Pineda2021MBRL}. For both MBPO and \sys, we use the optimal hyperparameters in \cite{Pineda2021MBRL}. We set $T_{\text{train}}=1$ for all the training of \sys. We mainly set two additional parameters, regularization parameters and the $\epsilon$-schedule \cite{zhang2020robust, oikarinen2021robust, gowal2018effectiveness} parameters for \sys. The additional regularization parameter $\lambda$ to start with for regularizing $\safetyLoss$ is chosen in $\{0.1, 0.3, 0.5, 0.7, 1.0\}$. The $\epsilon$-schedule starts as an exponential growth from $\epsilon=10^{-12}$ and transitions smoothly into a linear schedule until reaching $\epsilon_{\text{train}}$. Then the schedule keeps $\epsilon_t=\epsilon_{\text{train}}$ for the rest of iterations. We set the temperature parameter controlling the exponential growth with $4.0$ for all experiments. We have two other parameters to control the $\epsilon$-schedule: \textit{endStep}, and \textit{finalStep}, where \textit{endStep} is the step where $\epsilon_t$ reaches $\epsilon_{\text{train}}$ and \textit{finalStep} is the steps for the total training. The $\textit{midStep} = 0.25 * \textit{endStep}$ is the turning point from exponential growth to linear growth. \Cref{tab:eschedule} shows the details of each parameter.

\begin{table}[h]
\centering
\begin{tabular}{ c c c c c} 
\toprule
Environments & Methods & \textit{endStep} & \textit{finalStep} \\ \midrule
\multirow{2}{*}{\makecell{Hopper}} & \sys &$4 \times 10^5$ & $5 \times 10^5$\\ 
                                    & \sys-SS & $4 \times 10^5$ & $5 \times 10^5$ \\ \midrule
\multirow{2}{*}{\makecell{Ant}} & \sys & $8 \times 10^5$ & $9 \times 10^5$\\ 
                                    & \sys-SS & $4 \times 10^6$ & $5 \times 10^6$ \\ \midrule
\multirow{2}{*}{\makecell{Walker2d}} & \sys & $7 \times 10^5$ & $7.5 \times 10^5$\\ 
                                    & \sys-SS & $1.5 \times 10^6$ & $2 \times 10^6$ \\ \midrule
\multirow{2}{*}{\makecell{HalfCheetah}} & \sys & $7.5 \times 10^5$ & $8.5 \times 10^5$\\ 
                                    & \sys-SS & $7.5 \times 10^5$ & $8.5 \times 10^5$ \\ \midrule
\end{tabular}
\caption{Parameters for $\epsilon$-schedule.}
\label{tab:eschedule}
\end{table}

\subsection{Certificate Usage and Model Error}\label{app:model-error}
In evaluation, we compare the certified performance of policies trained from different algorithms with a set of environment models. We measure the certified performance of policies in \cref{fig:wcar-overall-0.90} with the environment models trained together. \cref{fig:model-error} shows the model error distribution across methods. The datasets for \textit{CAROL-together} and \textit{MBPO-together} are collected during training and the datasets for  \textit{separate} contain the data for 800k steps. We use $80\%$ for training and $20\%$ for testing. In training, we use basic supervised learning to get environment models. The model architectures are the same as in \cite{Pineda2021MBRL}. We use $\varepsilon_E$ with the $1 - \delta_E$ of $0.90$ in \cref{sec:eval}.

\begin{figure*}[h]
    \centering
    \includegraphics[scale=0.35, bb=0 0 1000 540]{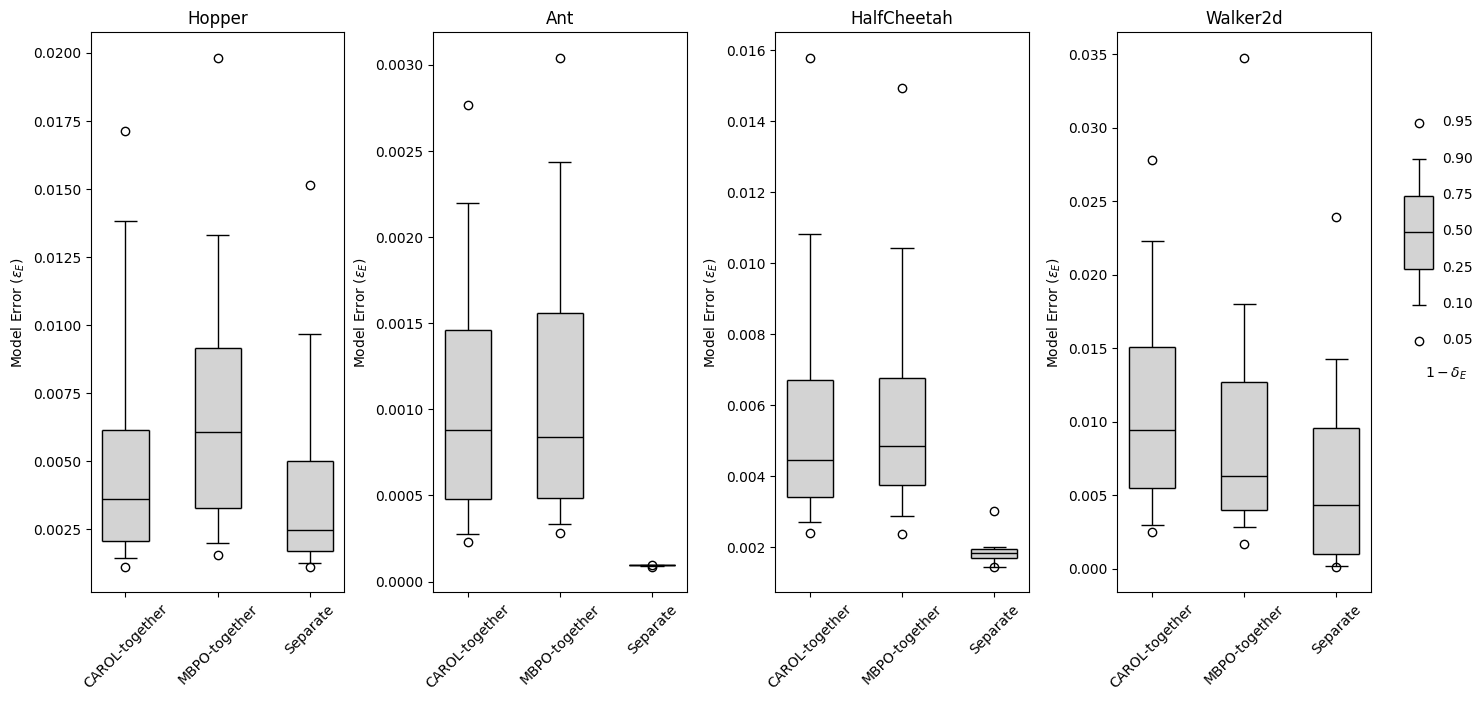}
    \caption{Demonstration of the model error distribution. \textit{CAROL-together}, \textit{MBPO-together}, and \textit{separate} represent the distribution from the models trained with CAROL, models trained with MBPO, and the models trained from datasets from rollout with random policies, respectively.}
    \label{fig:model-error}
\end{figure*}

\subsection{Certified Performance When Assuming the Model Being Perfect} \label{app:results-0error}

We demonstrate the certified performance when assuming the $\varepsilon_E$ being zero in \cref{fig:wcar-overall-0error} and \cref{fig:wcar-separate-0error}. The general trend of the certified performance does not change much, while the exact \wcar$/T$ increases. Specifically, Walker2d could give reasonable certification over longer horizons.

\begin{figure*}[ht]
    \centering
    \includegraphics[scale=0.45, bb=0 0 850 240]{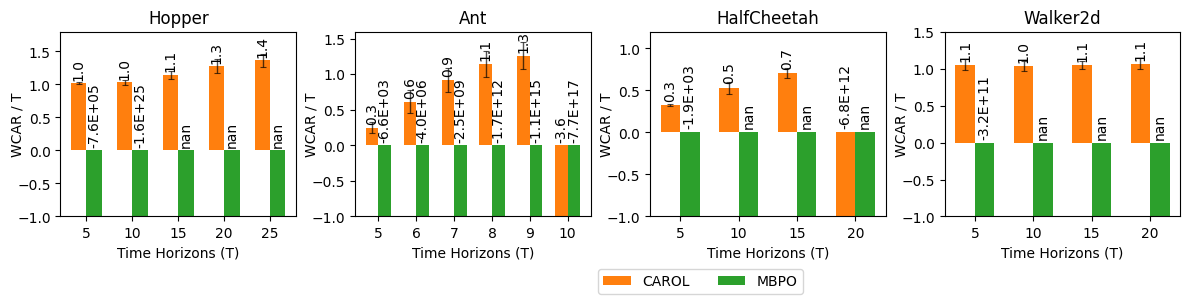}
    \caption{Certified performance of policies $\pi$ with the learned-together model, $E$. Each bar is an average of 25 starting states. The results are based on the assumption of $\varepsilon_{E}$ being $0.0$.}
    \label{fig:wcar-overall-0error}
\end{figure*}
\begin{figure*}[ht]
    \centering
    \includegraphics[scale=0.45, bb=0 0 850 240]{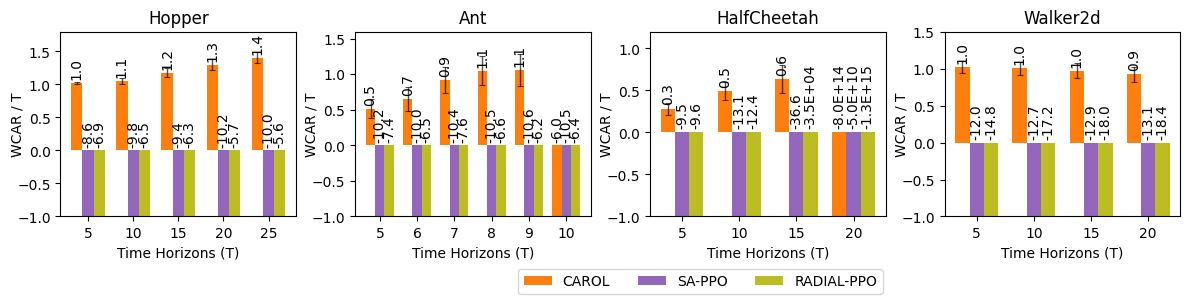}
    \caption{Certified performance of policies $\pi$ under a set of separately learned models, $\{E_i\}$. Each bar averages the learned policies on each $E_i$ of 25 starting states. The results are based on the assumption of $\varepsilon_{E}$ being $0.0$.}
    \label{fig:wcar-separate-0error}
\end{figure*}

\subsection{Certified Performance When Evaluating against Training Perturbation Range} \label{app:results-train}

We show the provability results with $\epsilon_{\text{test}}$ being $\epsilon_{\text{train}}$ in \cref{fig:wcar-overall-train} and \cref{fig:wcar-separate-train}. 

\begin{figure*}[ht]
    \centering
    \includegraphics[scale=0.45, bb=0 0 850 240]{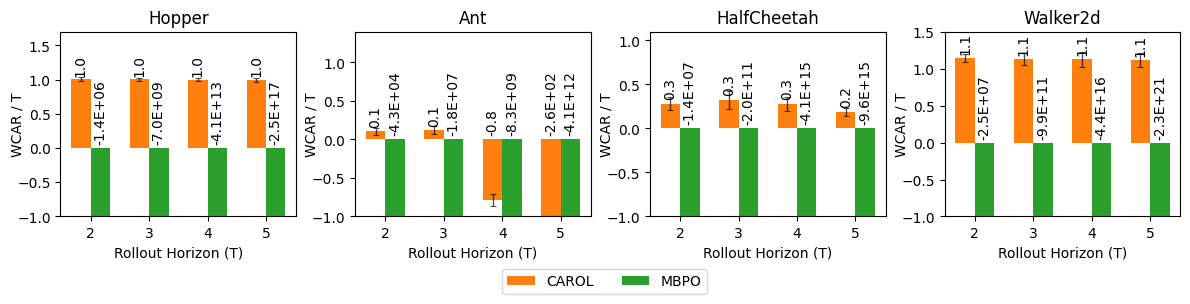}
    \caption{Certified performance of policies $\pi$ with the learned-together model, $E$. The results are based on $\varepsilon_E$ with a $1 - \delta_{E}$ of $0.9$.}
    \label{fig:wcar-overall-train}
\end{figure*}
\begin{figure*}[ht]
    \centering
    \includegraphics[scale=0.45, bb=0 0 850 240]{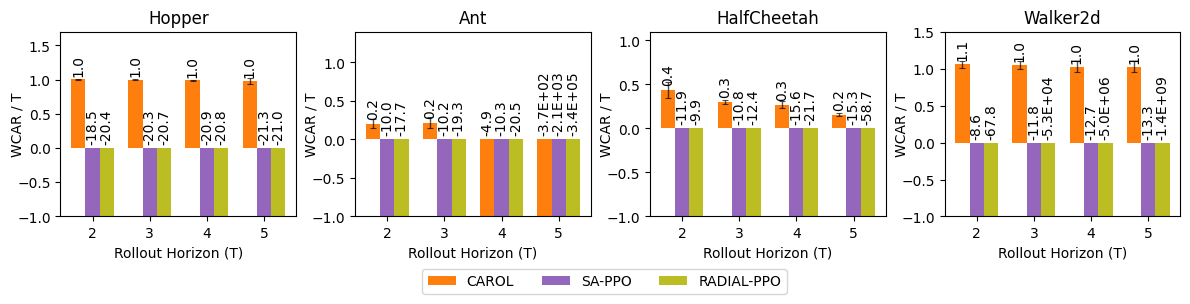}
    \caption{Certified performance of policies $\pi$ under a set of separately learned models, $\{E_i\}$. The results are based on $\varepsilon_E$ with a $1 - \delta_{E}$ of $0.9$.}
    \label{fig:wcar-separate-train}
\end{figure*}
\section{Qualitative Evaluation of the Abstract Trace Lower Bound}\label{app:quali-example}

\begin{figure}
\centering
\begin{subfigure}
  \centering
  \includegraphics[scale=0.4, bb=0 0 450 350]{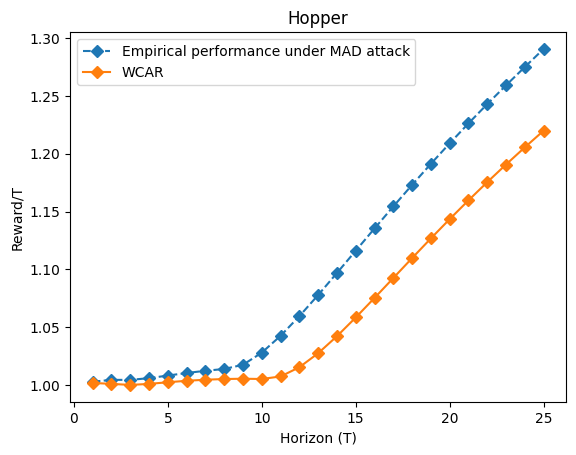}
\end{subfigure}%
\begin{subfigure}
  \centering
  \includegraphics[scale=0.4, bb=0 0 330 350]{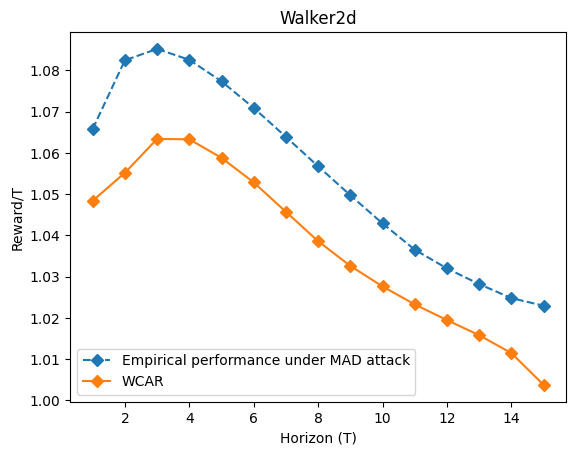}
\end{subfigure}
\caption{Examples of robustness certification of CAROL. We show the reward under one empirical attack (MAD) and WCAR (incorporating the model error with $1 - \delta_E$ being $0.90$) over horizons starting from the same initial state.}
\label{fig:certification-example}
\end{figure}

We give our attempt to evaluate and demonstrate the lower bound of the abstract traces over CAROL with examples in \Cref{fig:certification-example}. Specifically, we show the reward under one empirical attack (MAD) and our \wcar (incorporating the model error) over horizons starting from the same initial state. \wcar being always smaller than the reward under empirical attack indicates soundness. The reasonably small gap between the two lines indicates tightness. One interesting observation is that as the horizon increases, the gap increases. We give two possible explanations for this:
\begin{itemize}
\item The empirical attack is not strong enough to reveal the agents’ performance under the worst-case attack.
\item The overapproximation error and the model error from CAROL accumulate as the horizon increases.
\end{itemize}
\section{Abstract Bound Propagation} \label{app:ai-bound}

Now, we give an explanation of how interval bound propagation (IBP) works. CROWN \cite{zhang2018efficient} optimizes over IBP for tighter bound (specifically for Relu and sigmoid, etc.). IBP considers the box domain in the implementation. For a program with $m$ variables, each component in the domain represents a $m$-dimensional box. Each component of the domain is a pair $b = \langle b_c, b_e \rangle$, where $b_c \in \Reals^m$ is the center of the box and $b_e \in \Reals^m_{\ge 0}$ represents the non-negative deviations. The interval concretization of the $i$-th dimension variable of $b$ is given by 
$$
[{(b_c)}_i - {(b_e)}_i, {(b_c)}_i + {(b_e)}_i].
$$

Now we give the abstract update for the box domain following \cite{mirman2018differentiable}.

\paragraph{Add.} 
For a concrete function $f$ that replaces the $i$-th element in the input vector $x \in \Reals^m$ by the sum of the $j$-th and $k$-th element:
$$
f(x) = (x_1, \dots, x_{i-1}, x_{j} + x_{k}, x_{i+1}, \dots x_m)^T.
$$
The abstraction function of $f$ is given by:
$$
f^\#(b) = \langle M \cdot b_c, M \cdot b_e \rangle,
$$ where $M \in \Reals^{m \times m}$ can replace the $i$-th element of $x$ by the sum of the $j$-th and $k$-th element by $M \cdot b_c$.

\paragraph{Multiplication.}
For a concrete function $f$ that multiplies the $i$-th element in the input vector $x \in \Reals^m$ by a constant $w$:
$$
f(x) = (x_1, \dots, x_{i-1}, w \cdot x_i, x_{i+1}, \dots, x_m)^T.
$$
The abstraction function of $f$ is given by:
$$
f^\#(b) = \langle M_w \cdot b_c, M_{|w|} \cdot b_e \rangle,
$$
where $M_w \cdot b_c$ multiplies the $i$-th element of $b_c$ by $w$ and $M_{|w|} \cdot b_e$ multiplies the $i$-th element of $b_e$ with $|w|$.

\paragraph{Matrix Multiplication.}
For a concrete function $f$ that multiplies the input $x \in \Reals^m$ by a fixed matrix $M \in \Reals^{m' \times m}$:
$$
f(x) = M \cdot x.
$$
The abstraction function of $f$ is given by:
$$
f^\#(b) = \langle M \cdot b_c, |M| \cdot b_e \rangle,
$$
where $M$ is an element-wise absolute value operation. Convolutions follow the same approach, as they are also linear operations.

\paragraph{ReLU.}
For a concrete element-wise $\textrm{ReLU}$ operation over $x \in \Reals^m$:
$$
\textrm{ReLU}(x) = (\text{max}(x_1, 0), \dots, \text{max}(x_m, 0))^T, 
$$
the abstraction function of $\textrm{ReLU}$ is given by:
$$
\textrm{ReLU}^\#(b) = \langle \frac{\textrm{ReLU}(b_c + b_e) + \textrm{ReLU}(b_c - b_e)}{2}, \frac{\textrm{ReLU}(b_c + b_e) - \textrm{ReLU}(b_c - b_e)}{2}\rangle.
$$
where $b_c + b_e$ and $b_c - b_e$ denotes the element-wise sum and element-wise subtraction between $b_c$ and $b_e$.

\paragraph{Sigmoid.}
As $\textrm{Sigmoid}$ and $\textrm{ReLU}$ are both monotonic functions, the abstraction functions follow the same approach.
For a concrete element-wise $\textrm{Sigmoid}$ operation over $x \in \Reals^m$:
$$
\textrm{Sigmoid}(x) = (\frac{1}{1 + \text{exp}(-x_1)}, \dots, \frac{1}{1 + \text{exp}(-x_m)})^T, 
$$
the abstraction function of $\textrm{Sigmoid}$ is given by:
$$
\textrm{Sigmoid}^\#(b) = \langle \frac{\textrm{Sigmoid}(b_c + b_e) + \textrm{Sigmoid}(b_c - b_e)}{2}, \frac{\textrm{Sigmoid}(b_c + b_e) - \textrm{Sigmoid}(b_c - b_e)}{2}\rangle.
$$
where $b_c + b_e$ and $b_c - b_e$ denotes the element-wise sum and element-wise subtraction between $b_c$ and $b_e$. All the above abstract updates can be easily differentiable and parallelized on the GPU.
\section{Broader Discussion about Limitations and Future Works} \label{app:discuss}
We present a detailed discussion about limitations and future directions of our work below.

\paragraph{Adversarial Setting.} We focus on the state-adversarial setting. There are broader adversarial settings related to model disturbances and model uncertainty \cite{mankowitz2019robust}. Exploring the certified learning over environment dynamics perturbation is of interest. Specifically, we do not require a predefined model. We learn a model where the model misspecification amounts to supervised learning error. Future works would incorporate the potential disturbance of the environment in training to learn the dynamics model under \sys framework.

\paragraph{Dimensionality.} We focus on control benchmarks in this work. Related works about certification \cite{wu2021crop} use higher dimensional environments (e.g., Atari). However, the methods in CROP \cite{wu2021crop} primarily work on discrete state/action space and assume a deterministic environment. \sys is more general; while CROP does post-hoc verification, we focus on certified learning, where verification is integrated with learning. In addition, to the best of our knowledge, the environments we evaluate over have the largest dimensionality in certified RL papers \cite{dawson2022safe, chang2021stabilizing}.

\paragraph{Stochasticity.} We have challenges in handling highly random environments, which is a fundamental limitation of all certified learning techniques. We consider the stochasticity in the theoretical analysis by incorporating the variance of $R^{\#}$in the soundness bound. When stochasticity is large, the tightness of our bound may be affected.

\paragraph{Complexity.} Certified learning is more expensive than regular learning due to the requirement of certifiability and soundness guarantee. In training, the symbolic state is represented by the center and the width of a box (2x information representation per state). Propagating over a box needs an additional 2x computation compared to computation without symbolic states. Consequently, \sys is approximately two times slower than the base RL algorithm per step.

In summary, \sys’s performance is limited in the very large-scale MDPs over long rollout horizons mainly due to two reasons:
\begin{itemize}
\item Efficient abstract interpretation domains (e.g., Interval/Box) can give accumulated over-approximation error.
\item Tighter abstract interpretation domains are expensive in training and may not be differentiable. (e.g. bounded Zonotopes \cite{scott2016constrained} or Polyhedra \cite{singh2017fast}).
\end{itemize}

We believe that future works on tighter, more efficient, and differentiable abstract interpretation techniques would benefit \sys as our framework is not built on top of one particular abstract interpretation method. Additionally, a broader setting of adversarial perturbations would be an interesting future direction to extend our work.

\end{document}